\newcites{appendix}{References}
\definecolor{armygreen}{rgb}{0.0, 0.5, 0.0}
\newcommand{\C}{\textbf{C}}
\newcommand{\I}{\textbf{I}}
\newcommand{\w}{\textbf{w}}
\newcommand{\x}{\textbf{x}}
\newcommand{\z}{\textbf{z}}
\title{Towards Biologically Plausible \\Convolutional Networks}
\author{%
  Roman Pogodin \\
  Gatsby Unit, UCL \\
  \texttt{roman.pogodin.17@ucl.ac.uk} \\
  \And
  Yash Mehta \\
  Gatsby Unit, UCL \\
  \phantom{a}\texttt{y.mehta@ucl.ac.uk}\phantom{a} \\
  \AND
  Timothy P. Lillicrap \\
  DeepMind; CoMPLEX, UCL \\
  \phantom{aaa}\texttt{countzero@google.com}\phantom{aaa} \\
  \And
  Peter E. Latham \\
  Gatsby Unit, UCL \\
  \texttt{pel@gatsby.ucl.ac.uk} \\
}
\begin{document}

\maketitle

\begin{abstract}
Convolutional networks are ubiquitous in deep learning. They are particularly useful for images, as they reduce the number of parameters, reduce training time, and increase accuracy. However, as a model of the brain they are seriously problematic, since they require weight sharing -- something real neurons simply cannot do. Consequently, while neurons in the brain can be locally connected (one of the features of convolutional networks), they cannot be convolutional. Locally connected but non-convolutional networks, however, significantly underperform convolutional ones. This is troublesome for studies that use convolutional networks to explain activity in the visual system. Here we study plausible alternatives to weight sharing that aim at the same regularization principle, which is to make each neuron within a pool react similarly to identical inputs. The most natural way to do that is by showing the network multiple translations of the same image, akin to saccades in animal vision. However, this approach requires many translations, and doesn't remove the performance gap. We propose instead to add lateral connectivity to a locally connected network, and allow learning via Hebbian plasticity. This requires the network to pause occasionally for a sleep-like phase of ``weight sharing''. This method enables locally connected networks to achieve nearly convolutional performance on ImageNet and improves their fit to the ventral stream data, thus supporting convolutional networks as a model of the visual stream.
\end{abstract}

\section{Introduction}
Convolutional networks are a cornerstone of modern deep learning: they're widely used in the visual domain \cite{krizhevsky2012imagenet, simonyan2014very, he2016deep}, speech recognition \cite{abdel2014convolutional}, text classification \cite{conneau2016very}, and time series classification \cite{karim2017lstm}.
They have also played an important role in enhancing our understanding of the visual stream \cite{lindsay2020conv}. Indeed, simple and complex cells in the visual cortex \cite{hubel1962receptive} inspired convolutional and pooling layers in deep networks \cite{fukushima1982neocognitron} (with simple cells implemented with convolution and complex ones with pooling).
Moreover, the representations found in convolutional networks are similar to those in the visual stream \cite{yamins2014performance, khaligh2014deep, schrimpf2018brain, cadena2019deep} (see \cite{lindsay2020conv} for an in-depth review).

Despite the success of convolutional networks at reproducing activity in the visual system, as a model of the visual system they are somewhat problematic. That's because convolutional networks share weights, something biological networks, for which weight updates must be local, can't do \cite{grossberg1987competitive}. Locally connected networks avoid this problem by using the same receptive fields as convolutional networks (thus locally connected), but without weight sharing \cite{bartunov2018assessing}. However, they pay a price for biological plausibility: locally connected networks are known to perform worse than their convolutional counterparts on hard image classification tasks \cite{bartunov2018assessing, d2019finding}. There is, therefore, a need for a mechanism to bridge the gap between biologically plausible locally connected networks and implausible convolutional ones.

Here, we consider two such mechanisms. One is to use extensive data augmentation (primarily image translations); the other is to introduce an auxiliary objective that allows some form of weight sharing, which is implemented by lateral connections; we call this approach dynamic weight sharing.

The first approach, data augmentation, is simple, but we show that it suffers from two problems: it requires far more training data than is normally used, and even then it fails to close the performance gap between convolutional and locally connected networks. The second approach, dynamic weight sharing, implements a sleep-like phase in which neural dynamics facilitate weight sharing. This is done through lateral connections in each layer, which allows subgroups of neurons to share their activity. Through this lateral connectivity, each subgroup can first equalize its weights via anti-Hebbian learning, and then generate an input pattern for the next layer that helps it to do the same thing. Dynamic weight sharing doesn't achieve perfectly convolutional connectivity, because in each channel only subgroups of neurons share weights. However, it implements a similar inductive bias, and, as we show in experiments, it performs almost as well as convolutional networks, and also achieves better fit to the ventral stream data, as measured by the Brain-Score \cite{schrimpf2018brain, Schrimpf2020integrative}.

Our study suggests that convolutional networks may be biologically plausible, as they can be approximated in realistic networks simply by adding lateral connectivity and Hebbian learning. As convolutional networks and locally connected networks with dynamic weight sharing have similar performance, convolutional networks remain a good ``model organism'' for neuroscience. This is important, as they consume much less memory than locally connected networks, and run much faster.

\section{Related work}

Studying systems neuroscience through the lens of deep learning is an active area of research, especially when it comes to the visual system \cite{richards2019deep}. As mentioned above, convolutional networks in particular have been extensively studied as a model of the visual stream (and also inspired by it) \cite{lindsay2020conv}, and also as mentioned above, because they require weight sharing they lack biological plausibility. They have also been widely used to evaluate the performance of different  biologically plausible learning rules \cite{bartunov2018assessing, nokland2016direct, moskovitz2018feedback, mostafa2018deep,nokland2019training, akrout2019deep, laborieux2020scaling,pogodin2020kernelized}.

Several studies have tried to relax weight sharing in convolutions by introducing locally connected networks \cite{bartunov2018assessing, d2019finding, neyshabur2020towards} (\cite{neyshabur2020towards} also shows that local connectivity itself can be learned from a fully connected network with proper weight regularization). Locally connected networks perform as well as convolutional ones in shallow architectures \cite{bartunov2018assessing, neyshabur2020towards}. However, they perform worse for large networks and hard tasks, unless they're initialized from an already well-performing convolutional solution \cite{d2019finding} or have some degree of weight sharing \cite{pmlr-v119-elsayed20a}. In this study, we seek biologically plausible regularizations of locally connected network to improve their performance.

Convolutional networks are not the only deep learning architecture for vision: visual transformers (e.g., \cite{dosovitskiy2020image, carion2020end, touvron2020training, zhu2020deformable}), and more recently, the transformer-like architectures without self-attention \cite{tolstikhin2021mlpmixer, touvron2021resmlp, liu2021pay}, have shown competitive results. However, they still need weight sharing: at each block the input image is reshaped into patches, and then the same weight is used for all patches. Our Hebbian-based approach to weight sharing fits this computation as well (see \cref{app:subsec:transformers}).

\section{Regularization in locally connected networks}
\subsection{Convolutional versus locally connected networks}
Convolutional networks are implemented by letting the weights depend on the difference in indices. Consider, for simplicity, one dimensional convolutions and a linear network. Letting the input and output of a one layer in a network be $x_j$ and $z_i$, respectively, the activity in a convolutional network is

\begin{equation}
    z_i = \sum_{j=1}^N w_{i-j} x_j \, ,
\end{equation}
where $N$ is the number of neurons; for definiteness, we'll assume $N$ is the same in in each layer (right panel in \cref{fig:conv_lc_fc}).
Although the index $j$ ranges over all $N$ neurons, many, if not most, of the weights are zero: $w_{i-j}$ is nonzero only when $|i-j| \le k / 2 < N$ for kernel size $k$.

For networks that aren't convolutional, the weight matrix $w_{i-j}$ is replaced by $w_{ij}$,
\begin{equation}
    z_i = \sum_{j=1}^N w_{ij} x_{j} \, .
    \label{zll}
\end{equation}
Again, the index $j$ ranges over all $N$ neurons. If all the weights are nonzero, the network is fully connected (left panel in \cref{fig:conv_lc_fc}). But, as in convolutional networks, we can restrict the connectivity range by letting $w_{ij}$ be nonzero only when $|i-j| \le k / 2 < N$, resulting in a \textit{locally connected}, but non-convolutional, network (center panel in \cref{fig:conv_lc_fc}).

\begin{figure}[t]
    \centering
    \includegraphics[width=0.8\textwidth]{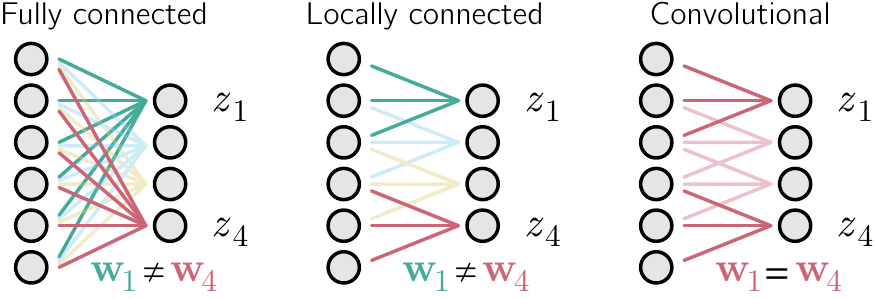}
    \caption{Comparison between layer architectures: fully connected (left), locally connected (middle) and convolutional (right). Locally connected layers have different weights for each neuron $z_1$ to $z_4$ (indicated by different colors), but have the same connectivity as convolutional layers.}
    \label{fig:conv_lc_fc}
\end{figure}

\subsection{Developing convolutional weights: data augmentation versus dynamic weight sharing}
Here we explore the question: is it possible for a locally connected network to develop approximately convolutional weights? That is, after training, is it possible to have $w_{ij} \approx w_{i-j}$?
There is one straightforward way to do this: augment the data to provide multiple translations of the same image, so that each neuron within a channel learns to react similarly (\cref{fig:regul_approaches}A). A potential problem is that a large number of translations will be needed. This makes training costly (see \cref{sec:experiments}), and is unlikely to be consistent with animal learning, as animals see only a handful of translations of any one image.

\begin{figure}[h]
    \centering
    \includegraphics[width=0.8\textwidth]{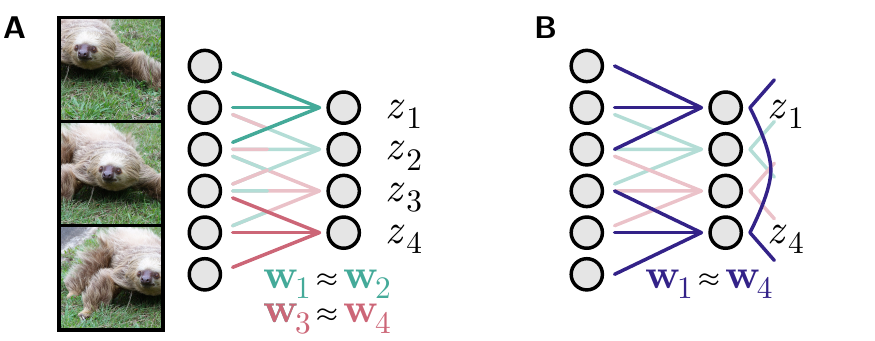}
    \caption{Two regularization strategies for locally connected networks. \textbf{A.} Data augmentation, where multiple translations of the same image are presented simultaneously. \textbf{B.} Dynamic weight sharing, where a subset of neurons equalizes their weights through lateral connections and learning.}
    \label{fig:regul_approaches}
\end{figure}

A less obvious solution is to modify the network so that during learning the weights become approximately convolutional. As we show in the next section, this can be done by adding lateral connections, and introducing a \textit{sleep phase} during training (\cref{fig:regul_approaches}B). This solution doesn't need more data, but it does need an additional training step.

\section{A Hebbian solution to dynamic weight sharing}
\label{sec:dynamic_weight_sharing}

If we were to train a locally connected network without any weight sharing or data augmentation, the weights of different neurons would diverge (region marked ``training'' in \cref{fig:dynamic_w_sh}A). Our strategy to make them convolutional is to introduce an occasional sleep phase, during which the weights relax to their mean over output neurons (region marked ``sleep'' in \cref{fig:dynamic_w_sh}A). This will compensate weight divergence during learning by convergence during the sleep phase. If the latter is sufficiently strong, the weights will remain approximately convolutional.

\begin{figure}[h]
    \centering
    \includegraphics[width=0.9\textwidth]{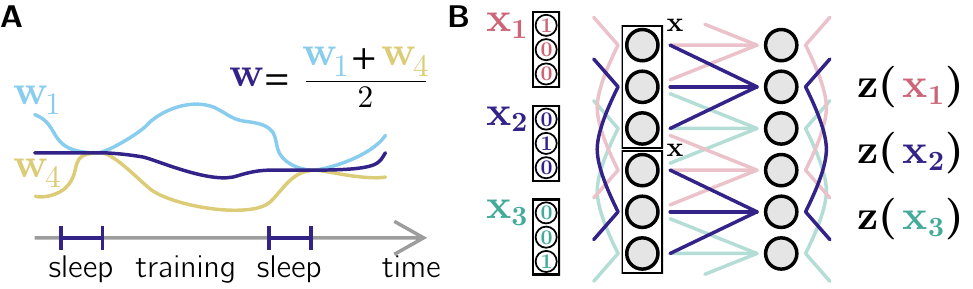}
    \caption{\textbf{A.} Dynamical weight sharing interrupts the main training loop, and equalizes the weights through internal dynamics. After that, the weights diverge again until the next weight sharing phase. \textbf{B.} A locally connected network, where both the input and the output neurons have lateral connections. The input layer uses lateral connections to generate repeated patterns for weight sharing in the output layer. For instance, the output neurons connected by the dark blue lateral connections (middle) can receive three different patterns: $\bb x_1$ (generated by the red input grid), $\bb x_2$ (dark blue) and $\bb x_3$ (green). }
    \label{fig:dynamic_w_sh}
\end{figure}

To implement this, we introduce lateral connectivity, chosen to equalize activity in both the input and output layer. That's shown in \cref{fig:dynamic_w_sh}B, where every third neuron in both the input ($\x$) and output ($\z$) layers are connected.
Once the connected neurons in the input layer have equal activity, all output neurons receive identical input. 
Since the lateral output connections also equalize activity, all connection that correspond to a translation by three neurons see exactly the same pre and postsynaptic activity. A naive Hebbian learning rule (with weight decay) would, therefore, make the network convolutional.
However, we have to take care that the initial weighs are not over-written during Hebbian learning. We now describe how that is done.

To ease notation, we'll let $\w_i$ be a vector containing the incoming weights to neuron $i$: $(\w_i)_j \equiv w_{ij}$. Moreover, we'll let $j$ run from 1 to $k$, independent of $i$. With this convention, the response of neuron $i$, $z_i$, to a $k$-dimensional input, $\x$, is given by
\begin{align}
    z_i = \w_i\trans \x = \sum_{j=1}^k w_{ij} x_j \, .
\end{align}

Assume that every neuron sees the same $\bb x$, and consider the following update rule for the weights,
\begin{equation}
    \Delta \bb w_i \propto -\brackets{z_i - \frac{1}{N}\sum_{j=1}^N z_j} \bb x - \gamma \brackets{\bb w_i  - \bb w_i^{\mathrm{init}}}\,,
    \label{eq:dynamic_weight_sharing_update}
\end{equation}
where $\bb w_i^{\mathrm{init}}$ are the weights at the beginning  of the sleep phase (not the overall training).

This Hebbian update effectively implements SGD over the sum of $(z_i-z_j)^2$, plus a regularizer (the second term) to keep the weights near $\w_i^{\rm init}$. If we present the network with $M$ different input vectors, $\bb x_m$, and denote the covariance matrix $\C \equiv \frac{1}{M}\sum_m \bb x_m \bb x_m\trans$, then, asymptotically, the weight dynamics in \cref{eq:dynamic_weight_sharing_update} converges to (see \cref{app:dynamic_w_sh})
\begin{align}
    \bb w_i^* \,&= \brackets{\C + \gamma\, \I}^{-1}  \brackets{\C  \frac{1}{N} \sum_{j=1}^N \bb w_j^{\mathrm{init}} + \gamma\, \bb w_i^{\mathrm{init}}}
    \label{eq:dynamic_w_sh_solution}
\end{align}
where $\I$ is the identity matrix.

\begin{figure}[h]
    \centering
    \includegraphics[width=\textwidth]{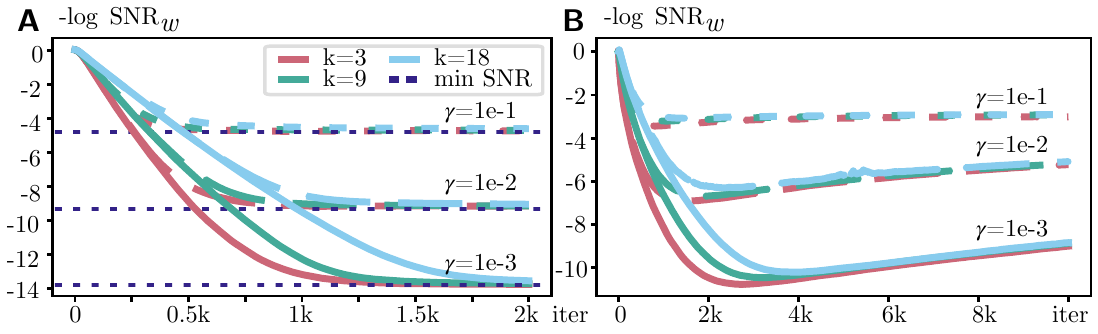}
    \caption{Negative logarithm of signal-to-noise ratio (mean weight squared over weight variance, see \cref{eq:snr}) for weight sharing objectives in a layer with 100 neurons. Different curves have different kernel size, $k$ (meaning $k^2$ inputs), and regularization parameter, $\gamma$. \textbf{A.} Weight updates given by \cref{eq:dynamic_weight_sharing_update}. Black dashed lines show the theoretical minimum. \textbf{B.} Weight updates given by \cref{eq:realistic_dynamicsab}, with $\alpha=10$. In each iteration, the input is presented for 150 ms.}
    \label{fig:snr_plots}
\end{figure}

As long as $\C$ is full rank and $\gamma$ is small, we arrive at shared weights: $\bb w_i^* \approx \frac{1}{N}\sum_{i=1}^N \bb w_i^{\mathrm{init}}$. It might seem advantageous to set $\gamma=0$, as non-zero $\gamma$ only biases the equilibrium value of the weight. However, non-zero $\gamma$ ensures that for noisy input, $\bb x_i = \bb x + \xi_i$ (such that the inputs to different neurons are the same only on average, which is much more realistic), the weights still converge (at least approximately) to the mean of the initial weights (see \cref{app:dynamic_w_sh}). 

In practice, the dynamics in \cref{eq:dynamic_weight_sharing_update} converges quickly. We illustrate it in \cref{fig:snr_plots}A by plotting $-\log\mathrm{SNR}_w$ over time, where $\textrm{SNR}_w$, the signal to noise ratio of the weights, is defined as 
\begin{equation}
    \mathrm{SNR}_w = \frac{1}{k^2}\sum_j\frac{\brackets{\frac{1}{N}\sum_i (\bb w_i)_j}^2}{\frac{1}{N}\sum_i \brackets{(\bb w_i)_j - \frac{1}{N}\sum_{i'} (\bb w_{i'})_j}^2}\,.
    \label{eq:snr}
\end{equation}
For all kernel sizes (we used 2d inputs, meaning $k^2$ inputs per neuron), the weights converge to a nearly convolutional solution within a few hundred iterations (note the logarithmic scale of the y axis in \cref{fig:snr_plots}A). See \cref{app:dynamic_w_sh} for simulation details. Thus, to run our experiments with deep networks in a realistic time frame, we perform weight sharing instantly (i.e., directly setting them to the mean value) during the sleep phase. 

\subsection{Dynamic weight sharing in multiple locally connected layers}

As shown in \cref{fig:dynamic_w_sh}B, the $k$-dimensional input, $\x$, repeats every $k$ neurons. Consequently, during the sleep phase, the weights are not set to the mean of their initial value averaged across all neurons; instead, they're set to the mean averaged across a set of neurons spaced by $k$. Thus, in one dimension, the sleep phase equilibrates the weights in $k$ different modules. In two dimensions (the realistic case), the sleep phase equilibrates the weights in $k^2$ different modules.

We need this spacing to span the whole $k$-dimensional (or $k^2$ for 2d) space of inputs.
For instance, activating the red grid on the left in \cref{fig:dynamic_w_sh}B generates $\x_1$, covering one input direction for all output neurons (and within each module, every neuron receives the same input). Next, activating the blue grid generates $\x_2$ (a new direction), and so on.

In multiple layers, the sleep phase is implemented layer by layer. In layer $l$, lateral connectivity creates repeated input patterns and feeds them to layer $l+1$.
After weight sharing in layer $l+1$, the new pattern from $l+1$ is fed to $l+2$, and so on. Notably, there's no layer by layer plasticity schedule (i.e., deeper layers don't have to wait for the earlier ones to finish), as the weight decay term in \cref{eq:dynamic_weight_sharing_update} ensures the final solution is the same regardless of intermediate weight updates. As long as a deeper layer starts receiving repeated patterns, it will eventually arrive at the correct solution.
Our approach has two limitations. First, this pattern generation scheme needs layers to have filters of the same size. Second, we assume that the very first layer (e.g., V1) receives inputs from another area (e.g., LGN) that can generate repeated patterns, but doesn't need weight sharing.

\subsection{A realistic model that implements the update rule}
\label{sec:realistic_update}
Our update rule, \cref{eq:dynamic_weight_sharing_update}, implies that there is a linear neuron, denoted $r_i$, whose activity depends on the upstream input, $z_i=\w_i\trans \bb x$, via a direct excitatory connection combined with lateral inhibition,
\begin{equation}
    r_i = z_i - \frac{1}{N}\sum_{j=1}^N z_j \equiv \bb w_i\trans \bb x - \frac{1}{N}\sum_{j=1}^N \bb w_j\trans \bb x \, .
\end{equation}
The resulting update rule is anti-Hebbian, $-r_i\, \bb x$ (see \cref{eq:dynamic_weight_sharing_update}).
In a realistic circuit, this can be implemented with excitatory neurons $r_i$ and an inhibitory neuron $r_{\mathrm{inh}}$, which obey the dynamics
\begin{subequations}
\label{eq:realistic_dynamicsab}
\begin{align}
    \tau \dot r_i \,&= -r_i + \bb w_i\trans \bb x - \alpha\, r_{\mathrm{inh}} + b\\
    \tau \dot r_{\mathrm{inh}} \,&= -r_{\mathrm{inh}} + \frac{1}{N} \sum_j r_{j} - b\,,
    \label{eq:realistic_dynamics}
\end{align}
\end{subequations}
where $b$ is the shared bias term that ensures non-negativity of firing rates (assuming $\sum_i \bb w_i\trans \bb x$ is positive, which would be the case for excitatory input neurons).
The only fixed point of this equations is
\begin{equation}
    r_i^* = b + \bb w_i\trans \bb x - \frac{1}{N}\sum_j \bb w_j\trans \bb x + \frac{1}{1 + \alpha}\frac{1}{N}\sum_j \bb w_j\trans \bb x \underset{\alpha \gg  1}{\approx} b + \bb w_i\trans \bb x - \frac{1}{N}\sum_j \bb w_j\trans \bb x\,,
    \label{eq:realistic_model_w_sh}
\end{equation}
which is stable.
As a result, for strong inhibition ($\alpha \gg 1$),  \cref{eq:dynamic_weight_sharing_update} can be implemented with an anti-Hebbian term $-(r_i-b)\bb x$. Note that if $\w_i \trans \x$ is zero on average, then $b$ is the mean firing rate over time. To show that \cref{eq:realistic_model_w_sh} provides enough signal, we simulated training in a network of 100 neurons that receives a new $\bb x$ each 150 ms. For a range of $k$ and $\gamma$, it converged to a nearly convolutions solution within minutes (\cref{fig:snr_plots}B; each iteration is 150 ms). Having finite inhibition did lead to worse final signal-to-noise ration ($\alpha=10$ in \cref{fig:snr_plots}B), but the variance of the weights was still very small. Moreover, the nature of the $\alpha$-induced bias suggests that stopping training before convergence leads to better results (around 2k iterations in \cref{fig:snr_plots}B). See \cref{app:dynamic_w_sh} for a discussion.

\section{Experiments}
\label{sec:experiments}
We split our experiments into two parts: small-scale ones with CIFAR10, CIFAR100 \cite{krizhevsky2009cifar} and TinyImageNet \cite{le2015tiny}, and large-scale ones with ImageNet \cite{deng2009imagenet}. The former illustrates the effects of data augmentation and dynamic weight sharing on the performance of locally connected networks; the latter concentrates on dynamic weight sharing, as extensive data augmentations are too computationally expensive for large networks and datasets. We used the AdamW \cite{loshchilov2017decoupled} optimizer in all runs. As our dynamic weight sharing procedure always converges to a nearly convolutional solution (see \cref{sec:dynamic_weight_sharing}), we set the weights to the mean directly (within each grid) to speed up experiments. Our code is available at \href{https://github.com/romanpogodin/towards-bio-plausible-conv}{https://github.com/romanpogodin/towards-bio-plausible-conv} (PyTorch \cite{NEURIPS2019_9015} implementation).

\textbf{Datasets.} CIFAR10 consists of 50k training and 10k test images of size $32\times 32$, divided into 10 classes. CIFAR100 has the same structure, but with 100 classes. For both, we tune hyperparameters with a 45k/5k train/validation split, and train final networks on the full 50k training set. TinyImageNet consists of 100k training and 10k validation images of size $64\times 64$, divided into 200 classes. As the test labels are not publicly available, we divided the training set into 90k/10k train/validation split, and used the 10k official validation set as test data. ImageNet consists of 1.281 million training images and 50k test images of different sizes, reshaped to 256 pixels in the smallest dimension. As in the case for TinyImageNet, we used the train set for a 1.271 million/10k train/validation split, and 50k official validation set as test data.

\textbf{Networks.} For CIFAR10/100 and TinyImageNet, we used CIFAR10-adapted ResNet20 from the original ResNet paper \cite{he2016deep}. The network has three blocks, each consisting of 6 layers, with 16/32/64 channels within the block. We chose this network due to good performance on CIFAR10, and the ability to fit the corresponding locally connected network into the 8G VRAM of the GPU for large batch sizes on all three datasets. For ImageNet, we took the half-width ResNet18 (meaning 32/64/128/256 block widths) to be able to fit a common architecture (albeit halved in width) in the locally connected regime into 16G of GPU VRAM. For both networks, all layers had $3 \times 3$ receptive field (apart from a few $1\times 1$ residual downsampling layers), meaning that weight sharing worked over $9$ individual grids in each layer.

\textbf{Training details.} We ran the experiments on our local laboratory cluster, which consists mostly of NVIDIA GTX1080 and RTX5000 GPUs. The small-scale experiments took from 1-2 hours per run up to 40 hours (for TinyImageNet with 16 repetitions). The large-scale experiments took from 3 to 6 days on RTX5000 (the longest run was the locally connected network with weight sharing happening after every minibatch update).

\subsection{Data augmentations.} 
For CIFAR10/100, we padded the images (padding size depended on the experiment) with mean values over the training set (such that after normalization the padded values were zero) and cropped to size $32\times 32$. We did not use other augmentations to separate the influence of padding/random crops. For TinyImageNet, we first center-cropped the original images to size $(48+2\,\mathrm{pad}) \times (48+2\,\mathrm{pad})$ for the chosen padding size $\mathrm{pad}$. The final images were then randomly cropped to $48\times 48$. This was done to simulate the effect of padding on the number of available translations, and to compare performance across different padding values on the images of the same size (and therefore locally connected networks of the same size). After cropping, the images were normalized using ImageNet normalization values. For all three datasets, test data was sampled without padding. For ImageNet, we used the standard augmentations. Training data was resized to 256 (smallest dimension), random cropped to $224\times 224$, flipped horizontally with $0.5$ probability, and then normalized. Test data was resized to 256, center cropped to 224 and then normalized. In all cases, data repetitions included multiple samples of the same image within a batch, keeping the total number of images in a batch fixed (e.g. for batch size 256 and 16 repetitions, that would mean 16 original images)

\begin{table}[]
\caption{Performance of convolutional (conv) and locally connected (LC) networks for padding of 4 in the input images (mean accuracy over 5 runs). For LC, two regularization strategies were applied: repeating the same image $n$ times with different translations ($n$ reps) or using dynamic weight sharing every $n$ batches (ws($n$)). LC nets additionally show performance difference w.r.t. conv nets.}
\resizebox{\textwidth}{!}{\begin{tabular}{@{}cccccccccccc@{}}
\toprule
\multirow{2}{*}{\textbf{Regularizer}} &
  \multirow{2}{*}{\textbf{Connectivity}} &
  \multicolumn{2}{c}{\textbf{CIFAR10}} &
  \multicolumn{4}{c}{\textbf{CIFAR100}} &
  \multicolumn{4}{c}{\textbf{TinyImageNet}} \\ \cmidrule(l){3-12} 
 &
   &
  \begin{tabular}[c]{@{}c@{}}Top-1 \\ accuracy (\%)\end{tabular} &
  \begin{tabular}[c]{@{}c@{}}Diff\end{tabular} &
  \begin{tabular}[c]{@{}c@{}}Top-1 \\ accuracy (\%)\end{tabular} &
  \begin{tabular}[c]{@{}c@{}}Diff\end{tabular} &
  \begin{tabular}[c]{@{}c@{}}Top-5 \\ accuracy (\%)\end{tabular} &
  \begin{tabular}[c]{@{}c@{}}Diff\end{tabular} &
  \begin{tabular}[c]{@{}c@{}}Top-1 \\ accuracy (\%)\end{tabular} &
  \begin{tabular}[c]{@{}c@{}}Diff\end{tabular} &
  \begin{tabular}[c]{@{}c@{}}Top-5 \\ accuracy (\%)\end{tabular} &
  \begin{tabular}[c]{@{}c@{}}Diff\end{tabular} \\ \midrule
  \multirow{2}{*}{-} & conv& 88.3 & - & 59.2 & - & 84.9 & - & 38.6 & - & 65.1 & -  \\
& LC& 80.9 & -7.4 & 49.8 & -9.4 & 75.5 & -9.4 & 29.6 & -9.0 & 52.7 & -12.4  \\
\midrule
\multirow{3}{*}{Data Translation} & LC - 4 reps& 82.9 & -5.4 & 52.1 & -7.1 & 76.4 & -8.5 & 31.9 & -6.7 & 54.9 & -10.2  \\
& LC - 8 reps& 83.8 & -4.5 & 54.3 & -5.0 & 77.9 & -7.0 & 33.0 & -5.6 & 55.6 & -9.5  \\
& LC - 16 reps& 85.0 & -3.3 & 55.9 & -3.3 & 78.8 & -6.1 & 34.0 & -4.6 & 56.2 & -8.8  \\
\midrule
\multirow{3}{*}{Weight Sharing} & LC - ws(1)& 87.4 & -0.8 & 58.7 & -0.5 & 83.4 & -1.6 & 41.6 & 3.0 & 66.1 & 1.1  \\
& LC - ws(10)& 85.1 & -3.2 & 55.7 & -3.6 & 80.9 & -4.0 & 37.4 & -1.2 & 61.8 & -3.2  \\
& LC - ws(100)& 82.0 & -6.3 & 52.8 & -6.4 & 80.1 & -4.8 & 37.1 & -1.5 & 62.8 & -2.3  \\\bottomrule\end{tabular}}
\label{tab:small}
\end{table}
\subsection{CIFAR10/100 and TinyImageNet}
To study the effect of both data augmentation and weight sharing on performance, we ran experiments with non-augmented images (padding 0) and with different amounts of augmentations.
This included padding of 4 and 8, and repetitions of 4, 8, and 16. Without augmentations, locally connected networks performed much worse than convolutional, although weight sharing improved the result a little bit (see \cref{app:experiments}). For padding of 4 (mean accuracy over 5 runs \cref{tab:small}, see \cref{app:experiments} for max-min accuracy), increasing the number of repetitions increased the performance of locally connected networks. However, even for 16 repetitions, the improvements were small comparing to weight sharing (especially for top-5 accuracy on TinyImageNet). For CIFAR10, our results are consistent with an earlier study of data augmentations in locally connected networks \cite{Ott2020LearningIT}.
For dynamic weight sharing, doing it moderately often -- every 10 iterations, meaning every 5120 images -- did as well as 16 repetitions on CIFAR10/100. For TinyImageNet, sharing weights every 100 iterations (about every 50k images) performed much better than data augmentation. 

Sharing weights after every batch performed almost as well as convolutions (and even a bit better on TinyImageNet, although the difference is small if we look at top-5 accuracy, which is a less volatile metric for 200 classes), but it is too frequent to be a plausible sleep phase. We include it to show that best possible performance of partial weight sharing is comparable to actual convolutions. %

For a padding of 8, the performance did improve for all methods (including convolutions), but the relative differences had a similar trend as for a padding of 4 (see \cref{app:experiments}).
We also trained locally connected networks with one repetition, but for longer and with a much smaller learning rate to simulate the effect of data repetitions. Even for 4x-8x longer runs, the networks barely matched the performance of a 1-repetition network on standard speed (not shown).

\subsection{ImageNet}
On ImageNet, we did not test image repetitions due to the computational requirements (e.g., running 16 repetitions with our resources would take almost 3 months). We used the standard data augmentation, meaning that all networks see different crops of the same image throughout training.

Our results are shown in \cref{tab:imagenet_accuracy}. Weight sharing every 1 and 10 iterations (256/2560 images, respectively, for the batch size of 256) achieves nearly convolutional performance, although less frequent weight sharing results in a more significant performance drop. 
In contrast, the purely locally connected network has a large performance gap with respect to the convolutional one. It is worth noting that the trade-off between weight sharing frequency and performance depends on the learning rate, as weights diverge less for smaller learning rates. It should be possible to decrease the learning rate and increase the number of training epochs, and achieve comparable results with less frequent weight sharing.

\begin{table}[]
\centering
\caption{Performance of convolutional (conv) and locally connected (LC) networks on ImageNet for 0.5x width ResNet18 (1 run). For LC, we also used dynamic weight sharing every $n$ batches. LC nets additionally show performance difference w.r.t. the conv net.}
\resizebox{0.65\textwidth}{!}{
\begin{tabular}{@{}cccccc@{}}
\toprule
  \multirow{2}{*}{\textbf{Connectivity}} &
  \multirow{2}{*}{\textbf{\begin{tabular}[c]{@{}c@{}}Weight sharing \\ frequency\end{tabular}}} &
  \multicolumn{4}{c}{\textbf{ImageNet}} \\ \cmidrule(l){3-6} 
   &
   &
  \begin{tabular}[c]{@{}c@{}}Top-1 \\ accuracy (\%)\end{tabular} &
  \multicolumn{1}{c}{\begin{tabular}[c]{@{}c@{}}Diff\end{tabular}} &
  \multicolumn{1}{c}{\begin{tabular}[c]{@{}c@{}}Top-5 \\ accuracy (\%)\end{tabular}} &
  \multicolumn{1}{c}{\begin{tabular}[c]{@{}c@{}} Diff\end{tabular}} \\
  \midrule
    conv & -   & 63.5   & - & 84.7 & - \\
    LC   & -   & 46.7   & -16.8  & 70.0 & -14.7 \\
    LC   & 1   & 61.7   & -1.8  & 83.1     & -1.6 \\
    LC   & 10  & 59.3  & -4.2  & 81.1 & -3.6 \\
    LC   & 100 & 54.5  & -9.0  & 77.7 & -7.0 \\
    \bottomrule
\end{tabular}}
\label{tab:imagenet_accuracy}
\end{table}

\subsection{Brain-Score of ImageNet-trained networks}

In addition to ImageNet performance, we evaluated how well representations built by our networks correspond to ventral stream data in primates. For that we used the Brain-Score \cite{schrimpf2018brain, Schrimpf2020integrative}, a set of metrics that evaluate deep networks’ correspondence to neural recordings of cortical areas V1 \cite{Freeman2013, marques2021multi}, V2 \cite{Freeman2013}, V4 \cite{Majaj13402}, and inferior temporal (IT) cortex \cite{Majaj13402} in primates, as well as behavioral data \cite{Rajalingham240614}. The advantage of Brain-Score is that it provides a standardized benchmark that looks at the whole ventral stream, and not only its isolated properties like translation invariance (which many early models focused on \cite{Riesenhuber1999HierarchicalMO, foldiak1991learning, wallis1993learning, wiskott2002slow}). We do not directly check for translation invariance in our models (only through V1/V2 data). However, as our approach achieves convolutional solutions (see above), we trivially have translation equivariance after training: translating the input will translate the layer's response (in our case, each $k$-th translation will produce the same translated response for a kernel of size $k$). (In fact, it's pooling layers, not convolutions, that achieve some degree of invariance in convolutional networks -- this is an architectural choice and is somewhat tangential to our problem.)

Our results are shown in \cref{tab:brain_score} (higher is better; see \href{http://www.brain-score.org/}{http://www.brain-score.org/} for the scores of other models). 
The well-performing locally connected networks (with weight sharing/sleep phase every 1 or 10 iterations) show overall better fit compared to their fully convolutional counterpart -- meaning that plausible approximations to convolutions also form more realistic representations. Interestingly, the worst-performing purely locally connected network had the second best V1 fit, despite overall poor performance. 

In general, Brain-Score correlates with ImageNet performance (Fig. 2 in \cite{schrimpf2018brain}). This means that the worse Brain-Score performance of the standard locally connected network and the one with weight sharing every 100 iterations can be related to their poor ImageNet performance (\cref{tab:imagenet_accuracy}). (It also means that our method can potentially increase Brain-Score performance of larger models.)

\begin{table}[]
\caption{Brain-Score of ImageNet-trained convolutional (conv) and locally connected (LC) networks on ImageNet for 0.5x width ResNet18 (higher is better). For LC, we also used dynamic weight sharing every $n$ batches. $^*$The models were evaluated on Brain-Score benchmarks available during submission. If new benchmarks are added and the models are re-evaluated on them, the final scores might change; the provided links contain the latest results.}
\resizebox{\textwidth}{!}{
\begin{tabular}{@{}ccccccccc@{}}
\toprule
  \multirow{2}{*}{\textbf{Connectivity}} &
  \multirow{2}{*}{\textbf{\begin{tabular}[c]{@{}c@{}}Weight sharing \\ frequency\end{tabular}}} &
  \multicolumn{7}{c}{\textbf{Brain-Score}} \\ \cmidrule(l){3-9}
   &
   &
  \begin{tabular}[c]{@{}c@{}}average score\end{tabular} &
  \multicolumn{1}{c}{\begin{tabular}[c]{@{}c@{}}V1\end{tabular}} &
  \multicolumn{1}{c}{\begin{tabular}[c]{@{}c@{}}V2\end{tabular}} &
  \multicolumn{1}{c}{\begin{tabular}[c]{@{}c@{}}V4\end{tabular}} & \multicolumn{1}{c}{\begin{tabular}[c]{@{}c@{}}IT\end{tabular}}  & \multicolumn{1}{c}{\begin{tabular}[c]{@{}c@{}}behavior\end{tabular}}
  & \multicolumn{1}{c}{\begin{tabular}[c]{@{}c@{}}link$^*$\end{tabular}}\\
  \midrule
    conv & -   & .357 &	.493 &	.313 &	.459 &	.370 &	.148  & \href{http://www.brain-score.org/model/876}{brain-score.org/model/876} \\
    LC   & -   & .349 &	\textbf{.542} &	.291 &	.448 &	.354 &	.108 	& \href{http://www.brain-score.org/model/877}{brain-score.org/model/877} \\
    LC   & 1   & \textbf{.396} &	.512 &	\textbf{.339} &	.468 &	\textbf{.406} &	\textbf{.255}  & \href{http://www.brain-score.org/model/880}{brain-score.org/model/880} \\
    LC   & 10  & .385 &	.508 &	.322 &	\textbf{.478} &	.399 &	.216  & \href{http://www.brain-score.org/model/878}{brain-score.org/model/878}\\
    LC  & 100 & .351 &	.523 &	.293 &	.467 &	.370 &	.101 & \href{http://www.brain-score.org/model/879}{brain-score.org/model/879}  \\
    \bottomrule
\end{tabular}}
\label{tab:brain_score}
\end{table}
\section{Discussion}
\label{sec:disscusion}
We presented two ways to circumvent the biological implausibility of weight sharing, a crucial component of convolutional networks. The first was through data augmentation via multiple image translations. The second was dynamic weight sharing via lateral connections, which allows neurons to share weight information during a sleep-like phase; weight updates are then done using Hebbian plasticity. 
Data augmentation requires a large number of repetitions in the data, and, consequently, longer training times, and yields only small improvements in performance. However, only a small number of repetitions can be naturally covered by saccades.
Dynamic weight sharing needs a separate sleep phase, rather than more data, and yields large performance gains. In fact, it achieves near convolutional performance even on hard tasks, such as ImageNet classification, making it a much more likely candidate than data augmentation for the brain. In addition, well-performing locally connected networks trained with dynamic weight sharing achieve a better fit to the ventral stream data (measured by the Brain-Score \cite{schrimpf2018brain, Schrimpf2020integrative}). The sleep phase can occur during actual sleep, when the network (i.e., the visual system) stops receiving visual inputs, but maintains some internal activity. This is supported by plasticity studies during sleep (e.g. \cite{Jha2005SleepDependentPR, puentes2017linking}). 

There are several limitations to our implementation of dynamic weight sharing. First, it relies on precise lateral connectivity. This can be genetically encoded, or learned early on using correlations in the input data (if layer $l$ can generate repeated patters, layer $l+1$ can modify its lateral connectivity based on input correlations). Lateral connections do in fact exist in the visual stream, with neurons that have similar tuning curves showing strong lateral connections \cite{fitzpatrick1997}.
Second, the sleep phase works iteratively over layers. This can be implemented with neuromodulation that enables plasticity one layer at a time. Alternatively, weight sharing could work simultaneously in the whole network due to weight regularization (as it ensures that the final solution preserves the initial average weight), although this would require longer training due to additional noise in deep layers. Third, in our scheme the lateral connections are used only for dynamic weight sharing, and not for training or inference. As our realistic model in \cref{sec:realistic_update} implements this connectivity via an inhibitory neuron, we can think of that neuron as being silent outside of the sleep phase. Finally, we trained networks using backpropagation, which is not biologically plausible \cite{richards2019deep}. However, our weight sharing scheme is independent of the wake-phase training algorithm, and therefore can be applied along with any biologically plausible update rule.

Our approach to dynamic weight sharing is not relevant only to convolutions. First, it is applicable to non-convolutional networks, and in particular visual transformers \cite{dosovitskiy2020image, carion2020end, touvron2020training, zhu2020deformable} (and more recent MLP-based architectures \cite{tolstikhin2021mlpmixer, touvron2021resmlp, liu2021pay}). In such architectures, input images (and intermediate two-dimensional representations) are split into non-overlapping patches; each patch is then transformed with the \textit{same} fully connected layer -- a computation that would require weight sharing in the brain. This can be done by connecting neurons across patches that have the same relative position, and applying our weight dynamics (see \cref{app:subsec:transformers}).
Second, \cite{akrout2019deep} faced a problem similar to weight sharing -- weight transport (i.e., neurons not knowing their output weights) -- when developing a plausible implementation of backprop. Their weight mirror algorithms used an idea similar to ours: the value of one weight was sent to another through correlations in activity.

Our study shows that both performance and the computation of convolutional networks can be reproduced in more realistic architectures. This supports convolutional networks as a model of the visual stream, and also justifies them as a ``model organism'' for studying learning in the visual stream (which is important partially due to their computational efficiency). While our study does not have immediate societal impacts (positive or negative), it further strengthens the role of artificial neural networks as a model of the brain. Such models can guide medical applications such as brain machine interfaces and neurological rehabilitation. However, that could also lead to the design of potentially harmful adversarial attacks on the brain. 

\section*{Acknowledgments and Disclosure of Funding}
The authors would like to thank Martin Schrimpf and Mike Ferguson for their help with the Brain-Score evaluation.

This  work  was  supported  by  the  Gatsby  Charitable Foundation, the Wellcome Trust and DeepMind.

\bibliographystyle{unsrt}
\bibliography{bibliography}

\section*{Checklist}

\begin{enumerate}

\item For all authors...
\begin{enumerate}
  \item Do the main claims made in the abstract and introduction accurately reflect the paper's contributions and scope?
    \answerYes{}
  \item Did you describe the limitations of your work?
    \answerYes{See \cref{sec:disscusion}.}
  \item Did you discuss any potential negative societal impacts of your work?
    \answerYes{See \cref{sec:disscusion}.}
  \item Have you read the ethics review guidelines and ensured that your paper conforms to them?
    \answerYes{}
\end{enumerate}

\item If you are including theoretical results...
\begin{enumerate}
  \item Did you state the full set of assumptions of all theoretical results?
    \answerYes{See the supplementary material.}
	\item Did you include complete proofs of all theoretical results?
    \answerYes{See the supplementary material.}
\end{enumerate}

\item If you ran experiments...
\begin{enumerate}
  \item Did you include the code, data, and instructions needed to reproduce the main experimental results (either in the supplemental material or as a URL)?
    \answerYes{See \href{https://github.com/romanpogodin/towards-bio-plausible-conv}{https://github.com/romanpogodin/towards-bio-plausible-conv} and the supplementary material.}
  \item Did you specify all the training details (e.g., data splits, hyperparameters, how they were chosen)?
    \answerYes{See the supplementary material.}
	\item Did you report error bars (e.g., with respect to the random seed after running experiments multiple times)?
    \answerYes{See the supplementary material for small-scale experiments.} \answerNo{The large-scale experiments were run 1 time due to resource constraints.}
	\item Did you include the total amount of compute and the type of resources used (e.g., type of GPUs, internal cluster, or cloud provider)?
    \answerYes{See \cref{sec:experiments}.}
\end{enumerate}

\item If you are using existing assets (e.g., code, data, models) or curating/releasing new assets...
\begin{enumerate}
  \item If your work uses existing assets, did you cite the creators?
    \answerYes{}
  \item Did you mention the license of the assets?
    \answerNo{We used publicly available datasets and open-source repositories.}
  \item Did you include any new assets either in the supplemental material or as a URL?
    \answerYes{Implementation: \href{https://github.com/romanpogodin/towards-bio-plausible-conv}{https://github.com/romanpogodin/towards-bio-plausible-conv}}
  \item Did you discuss whether and how consent was obtained from people whose data you're using/curating?
    \answerNo{We used publicly available datasets.}
  \item Did you discuss whether the data you are using/curating contains personally identifiable information or offensive content?
    \answerNo{We used publicly available datasets.}
\end{enumerate}

\item If you used crowdsourcing or conducted research with human subjects...
\begin{enumerate}
  \item Did you include the full text of instructions given to participants and screenshots, if applicable?
    \answerNA{}
  \item Did you describe any potential participant risks, with links to Institutional Review Board (IRB) approvals, if applicable?
    \answerNA{}
  \item Did you include the estimated hourly wage paid to participants and the total amount spent on participant compensation?
    \answerNA{}
\end{enumerate}

\end{enumerate}

\clearpage
\appendix
\renewcommand*\appendixpagename{\Large Appendices}
\appendixpage
\addappheadtotoc

\section{Dynamic weight sharing}
\label{app:dynamic_w_sh}

\subsection{Noiseless case}
Each neuron receives the same $k$-dimensional input $\bb x$, and its response $z_i$ is given by
\begin{align}
    z_i = \w_i\trans \x = \sum_{j=1}^k w_{ij} x_j \, .
\end{align}

To equalize the weights $\w_i$ among all neurons, the network minimizes the following objective,
\begin{align}
    \mathcal{L}_{\mathrm{w.\ sh.}}(\w_1,\dots, \w_N) \,&= \frac{1}{4MN}\sum_{m=1}^M\sum_{i=1}^N\sum_{j=1}^N\brackets{z_i - z_j}^2 + \frac{\gamma}{2}\sum_{i=1}^N\|\w_i - \w^{\mathrm{init}}_i\|^2\\
    &= \frac{1}{4MN}\sum_{m=1}^M\sum_{i=1}^N\sum_{j=1}^N\brackets{\w_i\trans \x_m - \w_j\trans \x_m}^2 + \frac{\gamma}{2}\sum_{i=1}^N\|\w_i - \w^{\mathrm{init}}_i\|^2\,,
    \label{app:eq:dynamic_weight_sharing_objective}
\end{align}
where $\w^{\mathrm{init}}_i$ is the weight at the start of dynamic weight sharing. This is a strongly convex function, and therefore it has a unique minimum.

The SGD update for one $\x_m$ is
\begin{equation}
    \Delta \bb w_i \propto -\brackets{z_i - \frac{1}{N}\sum_{j=1}^N z_j} \bb x_m - \gamma \brackets{\bb w_i  - \bb w_i^{\mathrm{init}}}\,.
    \label{app:eq:dynamic_weight_sharing_update}
\end{equation}

To find the fixed point of the dynamics, we first set the sum over the gradients to zero,
\begin{align}
    \sum_i \deriv{\mathcal{L}_{\mathrm{w.\ sh.}}(\w_1,\dots, \w_N)}{\w_i} \,&=  \frac{1}{M}\sum_{i, m}\brackets{z_i - \frac{1}{N}\sum_{j=1}^N z_j} \bb x_m + \gamma \sum_i\brackets{\bb w_i  - \bb w_i^{\mathrm{init}}}\\
    &=\gamma \sum_i\brackets{\bb w_i  - \bb w_i^{\mathrm{init}}} = 0\,.
\end{align}
Therefore, at the fixed point the mean weight $\boldsymbol{\mu}^* = \sum_i \w_i^*/N$ is equal to $\boldsymbol{\mu}^{\mathrm{init}}=\sum_i \w_i^{\mathrm{init}}/N$, and
\begin{align}
    \frac{1}{N}\sum_{i=1}^N z_i=\frac{1}{N}\sum_{i=1}^N \w_i^{*\top} \x_m = (\boldsymbol{\mu}^{\mathrm{init}})\trans\,\x_m\,.
\end{align}

We can now find the individual weights,
\begin{align}
    \deriv{\mathcal{L}_{\mathrm{w.\ sh.}}(\w_1,\dots, \w_N)}{\w_i} \,&= \frac{1}{M}\sum_{m}\brackets{z_i - \frac{1}{N}\sum_{j=1}^N z_j} \bb x_m + \gamma \brackets{\bb w_i  - \bb w_i^{\mathrm{init}}}\\
    &= \frac{1}{M}\sum_m \x_m \x_m\trans \brackets{\w_i - \boldsymbol{\mu}^{\mathrm{init}}} + \gamma \brackets{\bb w_i  - \bb w_i^{\mathrm{init}}}=0\,.
\end{align}

Denoting the covariance matrix $\C \equiv \frac{1}{M}\sum_m \bb x_m \bb x_m\trans$, we see that
\begin{align}
    \bb w_i^* \,&= \brackets{\C + \gamma\, \I}^{-1} \brackets{\C \, \boldsymbol{\mu}^{\mathrm{init}} + \gamma\, \bb w_i^{\mathrm{init}}} =  \brackets{\C + \gamma\, \I}^{-1}  \brackets{\C \, \frac{1}{N} \sum_{i=1}^N \bb w_i^{\mathrm{init}} + \gamma\, \bb w_i^{\mathrm{init}}}\,,
    \label{app:eq:dynamic_w_sh_solution}
\end{align}
where $\I$ is the identity matrix. From \cref{app:eq:dynamic_w_sh_solution} it is clear that $\w_i^*\approx \boldsymbol{\mu}^{\mathrm{init}}$ for small $\gamma$ and full rank $\bb C$. For instance, for $\bb C = \bb I$,
\begin{equation}
    \bb w_i^* = \frac{1}{1+\gamma}\boldsymbol{\mu}^{\mathrm{init}} + \frac{\gamma}{1+\gamma}\w_i^{\mathrm{init}}\,.
\end{equation}

\subsection{Biased noiseless case, and its correspondence to the realistic implementation}

\begin{figure}[h]
    \centering
    \includegraphics[width=\textwidth]{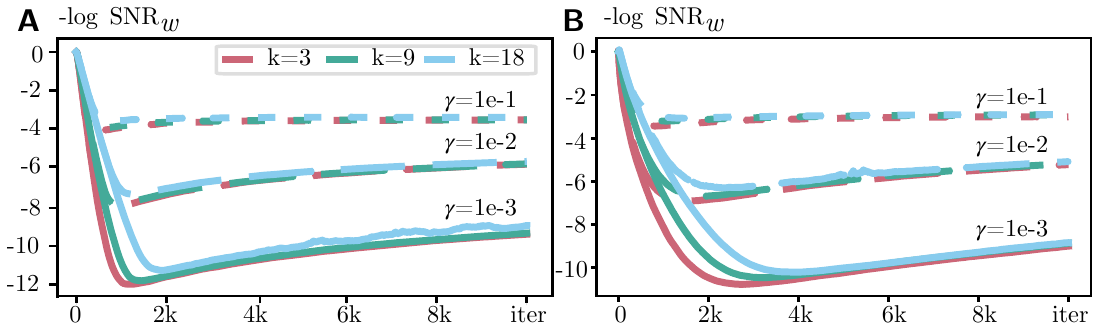}
    \caption{Logarithm of inverse signal-to-noise ratio (mean weight squared over weight variance, see \cref{eq:snr}) for weight sharing objectives in a layer with 100 neurons. \textbf{A.} Dynamics of \cref{app:eq:biased_update} for different kernel sizes $k$ (meaning $k^2$ inputs) and $\gamma$. \textbf{B.} Dynamics of weight update that uses \cref{eq:realistic_dynamics} for $\alpha=10$, different kernel sizes $k$ and $\gamma$. In each iteration, the input is presented for 150 ms.}
    \label{app:fig:snr_plots_biased}
\end{figure}

The realistic implementation of dynamic weight sharing with an inhibitory neuron (\cref{sec:realistic_update}) introduces a bias in the update rule: \cref{app:eq:dynamic_weight_sharing_update} becomes
\begin{equation}
    \Delta \bb w_i \propto -\brackets{z_i - \frac{\alpha}{N(1+\alpha)}\sum_{j=1}^N z_j} \bb x_m - \gamma \brackets{\bb w_i  - \bb w_i^{\mathrm{init}}}
    \label{app:eq:biased_update}
\end{equation}
for inhibition strength $\alpha$.

Following the same derivation as for the unbiased case, we can show that the weight dynamics converges to 
\begin{align}
    \sum_i \deriv{\mathcal{L}_{\mathrm{w.\ sh.}}(\w_1,\dots, \w_N)}{\w_i} \,&=  \frac{1}{M}\sum_{i, m}\brackets{z_i - \frac{\alpha}{1+\alpha}\frac{1}{N}\sum_{j=1}^N z_j} \bb x_m + \gamma \sum_i\brackets{\bb w_i  - \bb w_i^{\mathrm{init}}}\\
    &=\frac{1}{1+\alpha} \bb C \sum_i \w_i + \gamma \sum_i\brackets{\bb w_i  - \bb w_i^{\mathrm{init}}} = 0\,.
\end{align}
Therefore $\boldsymbol{\mu}^* = \gamma\brackets{\frac{1}{1+\alpha} \bb C + \gamma \bb I}^{-1} \boldsymbol{\mu}^{\mathrm{init}}$, and

\begin{align}
    \bb w_i^* \,&= \brackets{\C + \gamma\, \I}^{-1} \brackets{\frac{\gamma\alpha}{1+\alpha}\,\C\brackets{\frac{1}{1+\alpha} \bb C + \gamma \bb I}^{-1}   \boldsymbol{\mu}^{\mathrm{init}} + \gamma\, \bb w_i^{\mathrm{init}}}\,.
\end{align}

For $\C = \bb I$, this becomes
\begin{align}
    \bb w_i^* \,&= \frac{\gamma}{1 + \gamma} \brackets{\frac{ \alpha}{1 + \gamma (1 + \alpha)} \boldsymbol{\mu}^{\mathrm{init}} + \bb w_i^{\mathrm{init}}}\,.
\end{align}
As a result, the final weights are approximately the same among neurons, but have a small norm due to the $\gamma$ scaling. 

The dynamics in \cref{app:eq:biased_update} correctly captures the bias influence in \cref{eq:realistic_dynamics}, producing similar SNR plots; compare \cref{app:fig:snr_plots_biased}A (\cref{app:eq:biased_update} dynamics) to \cref{app:fig:snr_plots_biased}B (\cref{eq:realistic_dynamics} dynamics). The curves are slightly different due to different learning rates, but both follow the same trend of first finding a very good solution, and then slowly incorporating the bias term (leading to smaller SNR).

\subsection{Noisy case}

Realistically, all neurons can't see the same $\x_m$. However, due to the properties of our loss, we can work even with noisy updates. To see this, we write the objective function as
\begin{align}
    \mathcal{L}_{\mathrm{w.\ sh.}}(\w_1,\dots, \w_N) \,&= \frac{1}{M}\sum_{m=1}^M f(\bb W, \bb X_m)
\end{align}
where matrices $\bb W$ and $\bb X$ satisfy $(\bb W)_i=\w_i$ and $(\bb X_m)_i=\x_m$, and
\begin{align}
     f(\bb W, \bb X_m) \,&= \frac{1}{4N}\sum_{i=1}^N\sum_{j=1}^N\brackets{\w_i\trans \x_m - \w_i\trans \x_m}^2+  \frac{\gamma}{2}\sum_{i=1}^N\|\w_i - \w^{\mathrm{init}}_i\|^2\, .
\end{align}

We'll update the weights with SGD according to
\begin{equation}
    \Delta \bb W^{k+1} = -\eta_k \left.\deriv{}{\bb W} f(\bb W, \bb X_{m(k)} + \bb E^k) \right|_{\bb W^k}\,,
\end{equation}
where $(\bb E^k)_i = \boldsymbol{\eps}_i$ is zero-mean input noise and $m(k)$ is chosen uniformly.

Let's also bound the input mean and noise as
\begin{equation}
    \expect_{\bb E} \norm{\x_{m(k)} + \boldsymbol{\eps}_i}^2 \leq \sqrt{c_{x\eps}},\quad  \expect_{\bb E} \norm{\x_{m(k)} + \boldsymbol{\eps}_i}^4 \leq c_{x\eps}\,.
    \label{app:eq:noise_condition}
\end{equation}

With this setup, we can show that SGD with noise can quickly converge to the correct solution, apart from a constant noise-induced bias. Our analysis is standard and follows \cite{gower2019sgd}, but had to be adapted for our objective and noise model.
\begin{theorem} For zero-mean isotropic noise $\bb E$ with variance $\sigma^2$, uniform SGD sampling $m(k)$ and inputs $\x_m$ that satisfy \cref{app:eq:noise_condition}, choosing $\eta_k = O(1/k)$ leads to
\begin{align}
    \expect\,\|\bb W^{k+1} - \bb W^*\|^2_F = O\brackets{\frac{\norm{\bb W^{\mathrm{init}} - \bb W^*}_F}{k + 1}}  + O\brackets{\sigma^2\|\bb W^*\|^2_F}\,,
\end{align}
where $(\bb W^*)_i$ is given by \cref{app:eq:dynamic_w_sh_solution}.
\end{theorem}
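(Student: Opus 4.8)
The plan is to follow the standard template for stochastic gradient descent on a strongly convex objective \cite{gower2019sgd}, while tracking one extra systematic error introduced by the per-neuron input noise. First I would record the two structural facts we need about the objective $\mathcal{L}_{\mathrm{w.\ sh.}}$ in \cref{app:eq:dynamic_weight_sharing_objective}: it is $\mu$-strongly convex with $\mu \geq \gamma$ (the data-dependent term contributes a positive semidefinite Hessian, and the regularizer adds $\gamma\,\I$), so the minimizer $\bb W^*$ of \cref{app:eq:dynamic_w_sh_solution} is unique; and its stochastic gradients obey an expected-smoothness bound in which the smoothness constant and the gradient noise at the optimum are controlled by the fourth-moment assumption \cref{app:eq:noise_condition}.

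The key non-standard step is to decompose the stochastic gradient. Because each neuron $i$ sees an independently perturbed input $\x_{m(k)} + \boldsymbol{\eps}_i$ and the gradient is quadratic in the input, the quadratic-in-noise contribution $\expect[(\w_i\trans\boldsymbol{\eps}_i)\,\boldsymbol{\eps}_i] = \sigma^2\w_i$ does not vanish in expectation. Averaging over the noise I would therefore write $\expect[g_k \mid \bb W^k] = \nabla\mathcal{L}_{\mathrm{w.\ sh.}}(\bb W^k) + \sigma^2\bb W^k + O(1/N)$; that is, the update is biased as if minimizing $\mathcal{L}_{\mathrm{w.\ sh.}}$ plus a ridge penalty $\tfrac{\sigma^2}{2}\|\bb W\|_F^2$. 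This systematic bias, and not the zero-mean fluctuation, is what produces the irreducible floor.

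Next I would run the usual one-step expansion of $\|\bb W^{k+1} - \bb W^*\|_F^2$, take the conditional expectation, and treat the three resulting pieces separately: the true-gradient inner product is lower-bounded by $\mu\|\bb W^k - \bb W^*\|_F^2$ via strong convexity; the bias inner product $\sigma^2\langle\bb W^k, \bb W^k - \bb W^*\rangle$ is handled by Young's inequality, part of it being absorbed into the contraction and the remainder leaving a residual of order $\sigma^2\|\bb W^*\|_F^2$; and the second moment $\expect\|g_k\|_F^2$ is bounded through expected smoothness together with \cref{app:eq:noise_condition}. This yields a recursion $a_{k+1} \leq (1 - c_1\eta_k)\,a_k + c_2\eta_k^2 + c_3\,\eta_k\,\sigma^2\|\bb W^*\|_F^2$ with $a_k = \expect\|\bb W^k - \bb W^*\|_F^2$, valid as soon as $\gamma$ is large enough that $c_1 > 0$.

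Finally I would unroll this recursion with $\eta_k = O(1/k)$. The $c_2\eta_k^2$ term gives the $O(\|\bb W^{\mathrm{init}} - \bb W^*\|_F/(k+1))$ decay that forgets the initial condition, by the standard induction argument; the term $c_3\,\eta_k\,\sigma^2\|\bb W^*\|_F^2$ drives $a_k$ toward the stationary level $c_3\sigma^2\|\bb W^*\|_F^2/c_1$, a persistent floor that survives precisely because it scales with $\eta_k$ against a contraction of the same order. I expect the main obstacle to be the bookkeeping around the bias: one must check that the Young's-inequality splitting keeps the contraction constant $c_1$ strictly positive (this is where $\gamma$ dominating $\sigma^2$ enters) and that the surviving residual is genuinely of order $\sigma^2\|\bb W^*\|_F^2$ — up to the power of $\sigma$ that the $O$-notation absorbs — rather than depending on the running iterate, so that the decreasing step sizes cannot erase it.
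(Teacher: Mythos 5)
Your proposal is correct and ends at the same recursion and unrolling as the paper, but you handle the noise by a genuinely different decomposition. The paper never computes the mean of the stochastic gradient: it keeps the noisy gradient intact, lower-bounds the cross term via $\gamma$-strong convexity of $f$ in $\bb W$, and then uses convexity of $f$ in $\bb X$ to replace $f(\bb W^*, \bb X_{m(k)}+\bb E^k)- f(\bb W^k, \bb X_{m(k)}+\bb E^k)$ by the clean difference (which has nonpositive expectation since $\bb W^*$ minimizes $\mathcal{L}_{\mathrm{w.\ sh.}}$) plus gradient-in-$\bb X$ inner products with $\bb E^k$; the only surviving term is quadratic in the noise and evaluated at the \emph{optimum}, giving $(1-1/N)\,\sigma^2\|\bb W^*\|^2_F$ directly, with no Young's inequality and no bias at the running iterate. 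You instead take the conditional mean of the gradient --- which is exact here, since $f$ is quadratic in $\bb X$ --- and identify the ridge bias $\sigma^2(1-1/N)\,\bb W^k$, then split it with Young's inequality; this is valid and gives the same $O(1/k)+O(\sigma^2\|\bb W^*\|^2_F)$ conclusion, and your diagnosis that the floor comes from the systematic bias rather than the zero-mean fluctuation is exactly right. Two minor points of comparison: your caveat that $c_1>0$ requires $\gamma$ to dominate $\sigma^2$ is unnecessary, because expanding $-\sigma^2\langle \bb W^k-\bb W^*,\,\bb W^k\rangle = -\sigma^2\|\bb W^k-\bb W^*\|^2_F-\sigma^2\langle \bb W^k-\bb W^*,\,\bb W^*\rangle$ shows the ridge bias \emph{strengthens} the contraction, so after Young on the second term you keep $c_1\ge\gamma$ for any $\sigma$ (the paper's route sidesteps this bookkeeping entirely and also avoids the extra power of $\sigma$ your splitting can introduce); and where you keep the $c_2\eta_k^2$ term and sum it into the $O(1/(k+1))$ part by induction, the paper instead imposes a step-size condition making $\eta_k\cdot\bigl(8(2c_{x\eps}+\gamma^2)\|\bb W^*\|^2_F+4\gamma^2\|\bb W^{\mathrm{init}}\|^2_F\bigr)\le\sigma^2$ and folds those terms into the noise floor --- a cosmetic difference. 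Both arguments need the same second-moment bound on the stochastic gradient via \cref{app:eq:noise_condition}, which you correctly flag but should carry out as the paper does, bounding $\|\bb A_i\|_2$ and $\|\bb B_i\|_2$ by Frobenius norms.
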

\begin{proof}
Using the SGD update,
\begin{align}
    \|\bb W^{k+1} - \bb W^*\|^2_F \,&= \left\|\bb W^{k} -\eta_k \left.\deriv{}{\bb W} f(\bb W, \bb X_{m(k)} + \bb E^k) \right|_{\bb W^k} - \bb W^*\right\|^2_F    \\
    &=\left\|\bb W^{k} - \bb W^*\right\|^2_F - 2\eta_k\dotprod{\bb W^{k} - \bb W^*}{ \left.\deriv{}{\bb W} f(\bb W, \bb X_{m(k)} + \bb E^k) \right|_{\bb W^k}}\\
    &+\eta_k^2\left\| \left.\deriv{}{\bb W} f(\bb W, \bb X_{m(k)} + \bb E^k) \right|_{\bb W^k}\right\|^2_F\,.
\end{align}

We need to bound the second and the third terms in the equation above.

\textbf{Second term.} As $f$ is $\gamma$-strongly convex in $\bb W$,
\begin{align}
    &-\dotprod{\bb W^{k} - \bb W^*}{ \left.\deriv{}{\bb W} f(\bb W, \bb X_{m(k)} + \bb E^k) \right|_{\bb W^k}}  \\
    &\qquad\qquad \leq f(\bb W^*, \bb X_{m(k)} + \bb E^k)- f(\bb W^k, \bb X_{m(k)} + \bb E^k) - \frac{\gamma}{2}\|\bb W^{k} - \bb W^*\|^2_F\,.
\end{align}

As $f$ is convex in $\bb X$,
\begin{align}
    &f(\bb W^*, \bb X_{m(k)} + \bb E^k)- f(\bb W^k, \bb X_{m(k)} + \bb E^k) \leq f(\bb W^*, \bb X_{m(k)})- f(\bb W^k, \bb X_{m(k)}) \\
    &\qquad\qquad + \dotprod{\left.\deriv{}{\bb X} f(\bb W^*, \bb X) \right|_{\bb X_{m(k)} + \bb E^k} - \left.\deriv{}{\bb X} f(\bb W^k, \bb X) \right|_{\bb X_{m(k)}}}{\bb E^k}\,.
\end{align}
We only need to clarify one term here,
\begin{align}
    \brackets{\left.\deriv{}{\bb X} f(\bb W^*, \bb X) \right|_{\bb X_{m(k)} + \bb E^k}}_i = \brackets{\left.\deriv{}{\bb X} f(\bb W^*, \bb X) \right|_{\bb X_{m(k)}}}_i + \brackets{\bb w_i^{*\top}\boldsymbol{\eps}_i - \frac{1}{N}\sum_j\bb w_j^{*\top}\eps_j}\bb w_i^*\,.
\end{align}

Now we can take the expectation over $m(k)$ and $\bb E$. As $m(k)$ is uniform, and $\bb W^*$ minimizes the global function,
\begin{equation}
    \expect_{m(k)}\, \brackets{f(\bb W^*, \bb X_{m(k)})- f(\bb W^k, \bb X_{m(k)})} = \mathcal{L}_{\mathrm{w.\ sh.}}(\w_1^*,\dots, \w_N^*) - \mathcal{L}_{\mathrm{w.\ sh.}}(\w_1^k,\dots, \w_N^k) \leq 0\,.
\end{equation}
As $\bb E^k$ is zero-mean and isotropic with variance $\sigma^2$,
\begin{align}
    &\expect_{m(k), \bb E^k}\dotprod{\left.\deriv{}{\bb X} f(\bb W^*, \bb X) \right|_{\bb X_{m(k)} + \bb E^k} - \left.\deriv{}{\bb X} f(\bb W^k, \bb X) \right|_{\bb X_{m(k)}}}{\bb E^k}\\
    &= \expect_{\bb E^k} \sum_i\brackets{\bb w_i^{*\top}\boldsymbol{\eps}_i - \frac{1}{N}\sum_j\bb w_j^{*\top}\eps_j}\bb w_i^{*\top}\boldsymbol{\eps}_i =  \brackets{1 - \frac{1}{N}}\expect_{\bb E^k} \sum_i \brackets{\bb w_i^{*\top}\boldsymbol{\eps}_i}^2\\
    &=\brackets{1 - \frac{1}{N}}\expect_{\bb E^k} \sum_i \tr\brackets{\bb w_i^{*}\bb w_i^{*\top}\boldsymbol{\eps}_i\boldsymbol{\eps}_i\trans}\leq\sigma^2 \|\bb W^*\|^2_F\,.
\end{align}

So the whole second term becomes
\begin{align}
    &-2\eta_k\expect_{m(k),\bb E}\,\dotprod{\bb W^{k} - \bb W^*}{ \left.\deriv{}{\bb W} f(\bb W, \bb X_{m(k)} + \bb E^k) \right|_{\bb W^k}} \\
    &\qquad\qquad\leq - \gamma\eta_k \expect_{m(k),\bb E^k}\|\bb W^{k} - \bb W^*\|^2_F + \eta_k\sigma^2 \|\bb W^*\|^2_F\,.
\end{align}

\textbf{Third term.}
First, observe that
\begin{align}
    \deriv{}{\bb w_i} f(\bb W, \bb X) \,&= \x_i\x_i\trans \w_i-\x_i\frac{1}{N}\sum_j\x_j\trans \w_j + \gamma \w_i - \gamma \w_i^{\mathrm{init}} \\
    &=\brackets{1 - \frac{1}{N}}\bb A_i \w_i - \bb B_i \bb W+\gamma \w_i - \gamma \w_i^{\mathrm{init}}\,,
\end{align}
where $\bb A_i=\x_i\x_i\trans$ and $(\bb B_i)_j = \ind[i\neq j]\x_i\x_j\trans / N$.

Therefore, using $\|a+b\|^2 \leq 2\|a\|^2 + 2\|b\|^2$ twice, properties of the matrix 2-norm, and $(1 - 1/N)\leq 1$,
\begin{align}
    \norm{\deriv{}{\bb w_i} f(\bb W, \bb X)}^2 \leq 4\norm{\bb A_i}^2_2\norm{\w_i}^2 + 4\norm{\bb B_i}^2_2\norm{\bb W}^2 + 4\gamma^2 \norm{\w_i}^2 + 4\gamma^2\norm{\w_i^{\mathrm{init}}}^2\,.
\end{align}
In our particular case, bounding the 2 norm with the Frobenius norm gives
\begin{align}
    \expect_{m(k), \bb E} \norm{\bb A_i}^2_2 \,&\leq \expect_{m(k), \bb E} \norm{(\x_{m(k)} + \boldsymbol{\eps}_i)(\x_{m(k)} + \boldsymbol{\eps}_i)\trans}^2_F \\
    &= \expect_{m(k), \bb E} \norm{\x_{m(k)} + \boldsymbol{\eps}_i}^4 \leq c_{x\eps}\,.
\end{align}

Similarly,
\begin{align}
    \expect_{m(k), \bb E} \norm{\bb B_i}^2_2 \,&\leq \expect_{m(k), \bb E} \norm{\bb B_i}^2_F 
    \leq \frac{1}{N^2}\expect_{m(k), \bb E} \sum_{j\neq i}\norm{\x_{m(k)} + \boldsymbol{\eps}_i}^2\norm{\x_{m(k)} + \eps_j}^2\leq \frac{c_{x\eps}}{N}\,.
\end{align}

Therefore, we can bound the full gradient by the sum of individual bounds (as it's the Frobenius norm) and using  $\|a+b\|^2 \leq 2\|a\|^2 + 2\|b\|^2$ again,
\begin{align}
    &\expect_{m(k), \bb E}\norm{\left.\deriv{}{\bb W} f(\bb W, \bb X_{m(k) + \bb E^k})\right|_{\bb W^k}}^2_F  \leq 4(2 c_{x\eps} + \gamma^2) \norm{\bb W^k}^2_F + 4\gamma^2\norm{\bb W^{\mathrm{init}}}^2_F\\
    &\qquad\qquad\leq 8(2 c_{x\eps} + \gamma^2) \norm{\bb W^k - \bb W^*}^2_F + 8(2 c_{x\eps} + \gamma^2) \norm{\bb W^*}^2_F + 4\gamma^2\norm{\bb W^{\mathrm{init}}}^2_F\,.
\end{align}

Combining all of this, and taking the expectation over all steps before $k+1$, gives us
\begin{align}
    \expect\,\|\bb W^{k+1} - \bb W^*\|^2_F \,&\leq \brackets{1 - \gamma \eta_k + \eta_k^2 8(2 c_{x\eps} + \gamma^2)} \expect\, \left\|\bb W^{k} - \bb W^*\right\|^2_F\\
    & + \eta_k\sigma^2 \|\bb W^*\|^2_F +\eta_k^2\brackets{8(2 c_{x\eps} + \gamma^2) \norm{\bb W^*}^2_F + 4\gamma^2\norm{\bb W^{\mathrm{init}}}^2_F}\,.
\end{align}
If we choose $\eta_k$ such that $\eta_k\cdot\brackets{8(2 c_{x\eps} + \gamma^2) \norm{\bb W^*}^2_F + 4\gamma^2\norm{\bb W^{\mathrm{init}}}^2_F} \leq \sigma^2$, we can simplify the result, 
\begin{align}
    \expect\,\|\bb W^{k+1} - \bb W^*\|^2_F \,&\leq \brackets{1 - \gamma \eta_k + \eta_k^2 8(2 c_{x\eps} + \gamma^2)} \expect\, \left\|\bb W^{k} - \bb W^*\right\|^2_F + 2\eta_k\sigma^2 \|\bb W^*\|^2_F \\
    &\leq \brackets{\prod_{s=0}^{k}\brackets{1 - \gamma \eta_s + \eta_s^2 8(2 c_{x\eps} + \gamma^2)}} \expect\, \left\|\bb W^{\mathrm{init}} - \bb W^*\right\|^2_F \\
    & + 2\sigma^2\sum_{t=0}^{k}\eta_t\brackets{\prod_{s=1}^{t}\brackets{1 - \gamma \eta_s + \eta_s^2 8(2 c_{x\eps} + \gamma^2)}} \|\bb W^*\|^2_F\,.
\end{align}
If we choose $\eta_k = O(1/k)$, the first term will decrease as $1/k$. The second one will stay constant with time, and proportional to $\sigma^2$.

\end{proof}

\subsection{Applicability to vision transformers}
\label{app:subsec:transformers}
In vision transformers (e.g. \cite{dosovitskiy2020image}), an input image is reshaped into a matrix $\bb Z\in\RR^{N\times D}$ for $N$ non-overlapping patches of the input, each of size $D$. As the first step, $\bb Z$ is multiplied by a matrix $\bb U\in\RR^{D\times 3D}$ as $\bb Z' = \bb Z\bb U$. Therefore, an output neuron $z'_{ij} = \sum_k z_{ik} u_{kj}$ looks at $\bb z_i$ with the same weights as $z'_{i'j} = \sum_k z_{i'k} u_{kj}$ uses for $\bb z_{i'}$ for any $i'$. 

To share weights with dynamic weight sharing, for each $k$ we need to connect all $z_{ik}$ across $i$ (input layer), and for each $j$ -- all $z_{ij}'$ across $i$ (output layer). After that,  weight sharing will proceed just like for locally connected networks: activate an input grid $j_1$ (one of $D$ possible ones) to create a repeating input patter, then activate a grid $j_2$ and so on. 

\subsection{Details for convergence plots}
Both plots in \cref{fig:snr_plots} show mean negative log SNR over 10 runs, 100 output neurons each. Initial weights were drawn from $\mathcal{N}(1, 1)$. At every iteration, the new input $\x$ was drawn from $\mathcal{N}(1, 1)$ independently for each component. Learning was performed via SGD with momentum of 0.95. The minimum SNR value was computed from \cref{eq:dynamic_w_sh_solution}. For our data, the SNR expression in \cref{eq:snr} has $\brackets{\frac{1}{N}\sum_i (\bb w_i)_j}^2\approx 1$ and  $\frac{1}{N}\sum_i \brackets{(\bb w_i)_j - \frac{1}{N}\sum_{i'} (\bb w_{i'})_j}^2 \approx \gamma ^2 / (1+\gamma)^2$, therefore $-\log \mathrm{SNR}_{\mathrm{min}} =2 \log(\gamma / (1 + \gamma))$.

For \cref{fig:snr_plots}A, we performed 2000 iterations (with a new $\x$ each time). Learning rate at iteration $k$ was $\eta_k = 0.5 / (1000 + k)$. For \cref{app:fig:snr_plots_biased}A, we did the same simulation but for $10^4$ iterations.

For \cref{fig:snr_plots}B, network dynamics (\cref{eq:realistic_dynamics}) was simulated with $\tau = 30$ ms, $b=1$ using Euler method with steps size of 1 ms. We performed $10^4$ iterations (150 ms per iteration, with a new $\x$ each iteration). Learning rate at iteration $k$ was $\eta_k = 0.0003 / \sqrt{1 + k / 2} \cdot\ind[k \geq 50]$.

The code for both runs is provided in the supplementary material.

\section{Experimental details}
\label{app:experiments}

Both convolutional and LC layers did not have the bias term, and were initialized according to Kaiming Normal initialization \cite{he2015delving} with ReLU gain, meaning each weight was drawn from $\mathcal{N}(0, 2 / (c_{\mathrm{out}}k^2))$ for kernel size $k$ and $c_{\mathrm{out}}$ output channels.

All runs were done with automatic mixed precision, meaning that inputs to each layer (but not the weights) were stored as float16, and not float32. This greatly improved performance and memory requirements of the networks.

As an aside, the weight dynamics of sleep/training phases indeed followed \cref{fig:dynamic_w_sh}A. \cref{app:fig:snr_plots_divergence} shows $-\log\mathrm{SNR}_w$ (defined in \cref{eq:snr}) for weight sharing every 10 iterations on CIFAR10. For small learning rates, the weights do not diverge too much in-between sleep phases.

\begin{figure}[h]
    \centering
    \includegraphics[width=\textwidth]{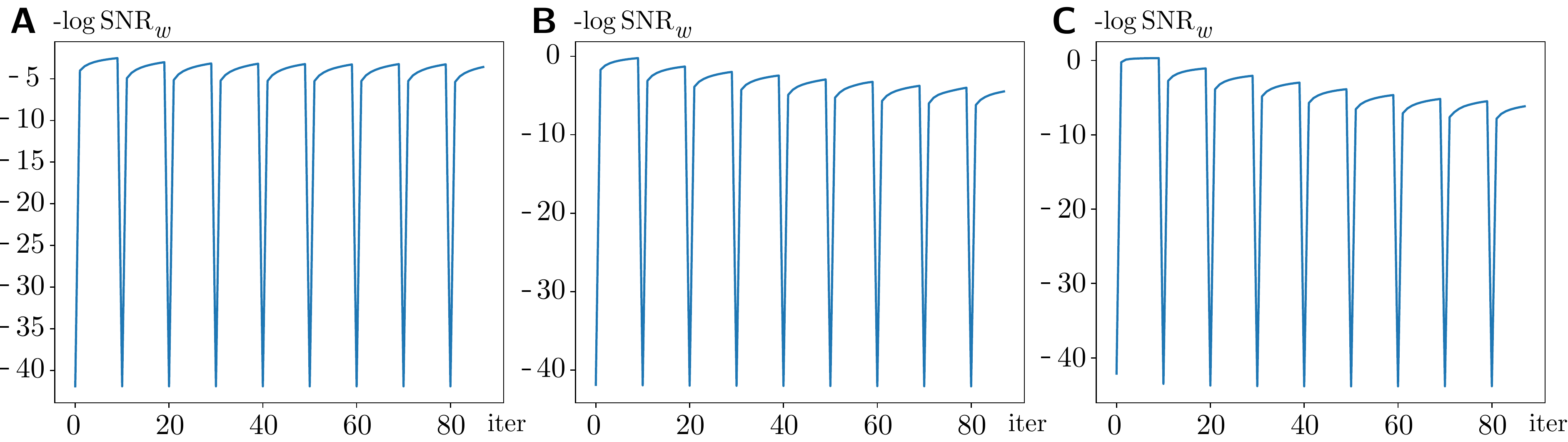}
    \caption{Logarithm of inverse signal-to-noise ratio (mean weight squared over weight variance, see \cref{eq:snr}) for weight sharing every 10 iterations for CIFAR10. \textbf{A.} Learning rate = 5e-4. \textbf{B.} Learning rate = 5e-3. \textbf{C.} Learning rate = 5e-2.}
    \label{app:fig:snr_plots_divergence}
\end{figure}

\subsection{CIFAR10/100, TinyImageNet}
Mean performance over 5 runs is summarized in \cref{app:tab:pad0} (padding of 0), \cref{app:tab:pad4} (padding of 4), and \cref{app:tab:pad8} (padding of 8). Maximum minus minimum accuracy is summarized in \cref{app:tab:pad0_minmax}, \cref{app:tab:pad4_minmax}, and \cref{app:tab:pad8_minmax}. Hyperparameters for AdamW (learning rate and weight decay) are provided in \cref{app:tab:pad0_param}, \cref{app:tab:pad4_param}, and \cref{app:tab:pad8_param}.

Hyperparameters were optimized on a train/validation split (see \cref{sec:experiments}) over the following grids. 
\textbf{CIFAR10/100.} Learning rate: [1e-1, 5e-2, 1e-2, 5e-3] (conv), [1e-3, 5e-4, 1e-4, 5e-5] (LC); weight decay [1e-2, 1e-4] (both). \textbf{TinyImageNet.} Learning rate: [5e-3, 1e-3, 5e-4] (conv), [1e-3, 5e-4] (LC); weight decay [1e-2, 1e-4] (both). The learning rate range for TinyImageNet was smaller as preliminary experiments showed poor performance for slow learning rates.

For all runs, the batch size was 512. For all final runs, learning rate was divided by 4 at 100 and then at 150 epochs (out of 200). Grid search for CIFAR10/100 was done for the same 200 epochs setup. For TinyImageNet, grid search was performed over 50 epochs with learning rate decreases at 25 and 37 epochs (i.e., the same schedule but compressed) due to the larger computational cost of full runs.

\subsection{ImageNet}
In addition to the main results, we also tested the variant of the locally connected network with a convolutional first layer (\cref{app:tab:imagenet_accuracy}). It improved performance for all configurations: from about 2\% for weight sharing every 1-10 iterations, to about 5\% for 100 iterations and for no weight sharing. This is not surprising, as the first layer has the largest resolution (224 by 224; initially, we performed these experiment due to memory constraints). Our result suggests that adding a ``good'' pre-processing layer (e.g. the retina) can also improve performance of locally connected networks.

\textbf{Final hyperparameters.} Learning rate: 1e-3 (conv, LC with w.sh. (1)), 5e-4 (all other LC; all LC with 1st layer conv), weight decay: 1e-2 (all). Hyperparameters were optimized on a train/validation split (see \cref{sec:experiments}) over the following grids. 
Conv: learning rate [1e-3, 5e-4], weight decay [1e-2, 1e-4, 1e-6]. LC: learning rate [1e-3, 5e-4, 1e-4, 5e-5], weight decay [1e-2]. LC (1st layer conv): learning rate [1e-3, 5e-4], weight decay [1e-2, 1e-4, 1e-6]. For LC, we only tried the large weight decay based on earlier experiment (LC (1st layer conv)). For LC (1st layer conv), we only tuned hyperparameters for LC and LC with weight sharing in each iteration, as they found the same values (weight sharing every 10/100 iterations interpolates between LC and LC with weight sharing in each iteration, and therefore is expected to behave similarly to both). In addition, for LC (1st layer conv) we only tested learning rate of 5e-4 for weight decay of 1e-2 as higher learning rates performed significantly worse for other runs (and in preliminary experiments).

For all runs, the batch size was 256. For all final runs, learning rate was divided by 4 at 100 and then at 150 epochs (out of 200). Grid search was performed over 20 epochs with learning rate decreases at 10 and 15 epochs (i.e., the same schedule but compressed) due to the large computational cost of full runs.

\begin{table}[]
\caption{Performance of convolutional (conv) and locally connected (LC) networks for padding of 0 in the input images (mean accuracy over 5 runs). For LC, two regularization strategies were applied: repeating the same image $n$ times with different translations ($n$ reps) or using dynamic weight sharing every $n$ batches (ws ($n$)). LC nets additionally show performance difference w.r.t. conv nets.}
\resizebox{\textwidth}{!}{\begin{tabular}{@{}cccccccccccc@{}}
\toprule
\multirow{2}{*}{\textbf{Regularizer}} &
  \multirow{2}{*}{\textbf{Connectivity}} &
  \multicolumn{2}{c}{\textbf{CIFAR10}} &
  \multicolumn{4}{c}{\textbf{CIFAR100}} &
  \multicolumn{4}{c}{\textbf{TinyImageNet}} \\ \cmidrule(l){3-12} 
 &
   &
  \begin{tabular}[c]{@{}c@{}}Top-1 \\ accuracy (\%)\end{tabular} &
  \begin{tabular}[c]{@{}c@{}}Diff\end{tabular} &
  \begin{tabular}[c]{@{}c@{}}Top-1 \\ accuracy (\%)\end{tabular} &
  \begin{tabular}[c]{@{}c@{}}Diff\end{tabular} &
  \begin{tabular}[c]{@{}c@{}}Top-5 \\ accuracy (\%)\end{tabular} &
  \begin{tabular}[c]{@{}c@{}}Diff\end{tabular} &
  \begin{tabular}[c]{@{}c@{}}Top-1 \\ accuracy (\%)\end{tabular} &
  \begin{tabular}[c]{@{}c@{}}Diff\end{tabular} &
  \begin{tabular}[c]{@{}c@{}}Top-5 \\ accuracy (\%)\end{tabular} &
  \begin{tabular}[c]{@{}c@{}}Diff\end{tabular} \\ \midrule
\multirow{2}{*}{-} & conv& 84.1 & - & 49.5 & - & 78.2 & - & 26.0 & - & 51.2 & -  \\
& LC& 67.2 & -16.8 & 34.9 & -14.6 & 62.2 & -16.0 & 12.0 & -14.1 & 30.4 & -20.7  \\
\midrule
\multirow{3}{*}{Weight Sharing} & LC - ws(1)& 74.8 & -9.3 & 41.8 & -7.7 & 70.1 & -8.1 & 24.9 & -1.2 & 49.1 & -2.1  \\
& LC - ws(10)& 75.9 & -8.1 & 44.4 & -5.1 & 72.0 & -6.2 & 28.1 & 2.0 & 52.5 & 1.3  \\
& LC - ws(100)& 75.4 & -8.6 & 43.4 & -6.1 & 71.9 & -6.3 & 27.4 & 1.3 & 51.9 & 0.8  \\
 \bottomrule
\end{tabular}}
\label{app:tab:pad0}
\end{table}

\begin{table}[]
\caption{Mean performance over 5 runs. Same as \cref{app:tab:pad0}, but for padding of 4.}
\resizebox{\textwidth}{!}{\begin{tabular}{@{}cccccccccccc@{}}
\toprule
\multirow{2}{*}{\textbf{Regularizer}} &
  \multirow{2}{*}{\textbf{Connectivity}} &
  \multicolumn{2}{c}{\textbf{CIFAR10}} &
  \multicolumn{4}{c}{\textbf{CIFAR100}} &
  \multicolumn{4}{c}{\textbf{TinyImageNet}} \\ \cmidrule(l){3-12} 
 &
   &
  \begin{tabular}[c]{@{}c@{}}Top-1 \\ accuracy (\%)\end{tabular} &
  \begin{tabular}[c]{@{}c@{}}Diff\end{tabular} &
  \begin{tabular}[c]{@{}c@{}}Top-1 \\ accuracy (\%)\end{tabular} &
  \begin{tabular}[c]{@{}c@{}}Diff\end{tabular} &
  \begin{tabular}[c]{@{}c@{}}Top-5 \\ accuracy (\%)\end{tabular} &
  \begin{tabular}[c]{@{}c@{}}Diff\end{tabular} &
  \begin{tabular}[c]{@{}c@{}}Top-1 \\ accuracy (\%)\end{tabular} &
  \begin{tabular}[c]{@{}c@{}}Diff\end{tabular} &
  \begin{tabular}[c]{@{}c@{}}Top-5 \\ accuracy (\%)\end{tabular} &
  \begin{tabular}[c]{@{}c@{}}Diff\end{tabular} \\ \midrule
\multirow{2}{*}{-} & conv& 88.3 & - & 59.2 & - & 84.9 & - & 38.6 & - & 65.1 & -  \\
& LC& 80.9 & -7.4 & 49.8 & -9.4 & 75.5 & -9.4 & 29.6 & -9.0 & 52.7 & -12.4  \\
\midrule
\multirow{3}{*}{Data Translation} & LC - 4 reps& 82.9 & -5.4 & 52.1 & -7.1 & 76.4 & -8.5 & 31.9 & -6.7 & 54.9 & -10.2  \\
& LC - 8 reps& 83.8 & -4.5 & 54.3 & -5.0 & 77.9 & -7.0 & 33.0 & -5.6 & 55.6 & -9.5  \\
& LC - 16 reps& 85.0 & -3.3 & 55.9 & -3.3 & 78.8 & -6.1 & 34.0 & -4.6 & 56.2 & -8.8  \\
\midrule
\multirow{3}{*}{Weight Sharing} & LC - ws(1)& 87.4 & -0.8 & 58.7 & -0.5 & 83.4 & -1.6 & 41.6 & 3.0 & 66.1 & 1.1  \\
& LC - ws(10)& 85.1 & -3.2 & 55.7 & -3.6 & 80.9 & -4.0 & 37.4 & -1.2 & 61.8 & -3.2  \\
& LC - ws(100)& 82.0 & -6.3 & 52.8 & -6.4 & 80.1 & -4.8 & 37.1 & -1.5 & 62.8 & -2.3 \\
 \bottomrule
\end{tabular}}
\label{app:tab:pad4}
\end{table}

\begin{table}[]
\caption{Mean performance over 5 runs. Same as \cref{app:tab:pad0}, but for padding of 8.}
\resizebox{\textwidth}{!}{\begin{tabular}{@{}cccccccccccc@{}}
\toprule
\multirow{2}{*}{\textbf{Regularizer}} &
  \multirow{2}{*}{\textbf{Connectivity}} &
  \multicolumn{2}{c}{\textbf{CIFAR10}} &
  \multicolumn{4}{c}{\textbf{CIFAR100}} &
  \multicolumn{4}{c}{\textbf{TinyImageNet}} \\ \cmidrule(l){3-12} 
 &
   &
  \begin{tabular}[c]{@{}c@{}}Top-1 \\ accuracy (\%)\end{tabular} &
  \begin{tabular}[c]{@{}c@{}}Diff\end{tabular} &
  \begin{tabular}[c]{@{}c@{}}Top-1 \\ accuracy (\%)\end{tabular} &
  \begin{tabular}[c]{@{}c@{}}Diff\end{tabular} &
  \begin{tabular}[c]{@{}c@{}}Top-5 \\ accuracy (\%)\end{tabular} &
  \begin{tabular}[c]{@{}c@{}}Diff\end{tabular} &
  \begin{tabular}[c]{@{}c@{}}Top-1 \\ accuracy (\%)\end{tabular} &
  \begin{tabular}[c]{@{}c@{}}Diff\end{tabular} &
  \begin{tabular}[c]{@{}c@{}}Top-5 \\ accuracy (\%)\end{tabular} &
  \begin{tabular}[c]{@{}c@{}}Diff\end{tabular} \\ \midrule
\multirow{2}{*}{-} & conv& 88.7 & - & 59.6 & - & 85.4 & - & 42.6 & - & 68.7 & -  \\
& LC& 80.7 & -8.0 & 47.7 & -11.8 & 74.8 & -10.6 & 31.9 & -10.7 & 55.4 & -13.3  \\
\midrule
\multirow{3}{*}{Data Translation} & LC - 4 reps& 82.8 & -6.0 & 50.6 & -9.0 & 76.2 & -9.2 & 35.5 & -7.1 & 58.6 & -10.1  \\
& LC - 8 reps& 83.6 & -5.1 & 53.0 & -6.6 & 77.4 & -8.0 & 35.8 & -6.7 & 59.0 & -9.7  \\
& LC - 16 reps& 85.0 & -3.8 & 55.6 & -4.0 & 78.4 & -7.0 & 37.9 & -4.7 & 60.3 & -8.4  \\
\midrule
\multirow{3}{*}{Weight Sharing} & LC - ws(1)& 87.8 & -0.9 & 59.2 & -0.4 & 84.0 & -1.4 & 43.6 & 1.0 & 67.9 & -0.9  \\
& LC - ws(10)& 84.3 & -4.5 & 53.7 & -5.8 & 80.4 & -5.0 & 39.6 & -2.9 & 64.5 & -4.3  \\
& LC - ws(100)& 79.5 & -9.3 & 50.0 & -9.6 & 78.6 & -6.8 & 39.2 & -3.4 & 64.8 & -3.9  \\
 \bottomrule
\end{tabular}}
\label{app:tab:pad8}
\end{table}

\begin{table}[]
\caption{Max minus min performance over 5 runs; padding of 0.}
\resizebox{\textwidth}{!}{\begin{tabular}{@{}cccccccccccc@{}}
\toprule
\multirow{2}{*}{\textbf{Regularizer}} &
  \multirow{2}{*}{\textbf{Connectivity}} &
  \multicolumn{1}{c}{\textbf{CIFAR10}} &
  \multicolumn{2}{c}{\textbf{CIFAR100}} &
  \multicolumn{2}{c}{\textbf{TinyImageNet}} \\ \cmidrule(l){3-7} 
 &
   &
  \begin{tabular}[c]{@{}c@{}}Top-1 \\ accuracy (\%)\end{tabular} &
  
  \begin{tabular}[c]{@{}c@{}}Top-1 \\ accuracy (\%)\end{tabular} &

  \begin{tabular}[c]{@{}c@{}}Top-5 \\ accuracy (\%)\end{tabular} &

  \begin{tabular}[c]{@{}c@{}}Top-1 \\ accuracy (\%)\end{tabular} &

  \begin{tabular}[c]{@{}c@{}}Top-5 \\ accuracy (\%)\end{tabular} &
  \\ \midrule
\multirow{2}{*}{-} & conv& 0.5 & 1.0 & 1.7 & 1.0 & 0.4  \\
& LC& 0.4 & 1.6 & 1.5 & 1.0 & 1.7  \\
\midrule
\multirow{3}{*}{Weight Sharing} & LC - ws(1)& 0.5 & 1.3 & 1.3 & 1.2 & 2.0  \\
& LC - ws(10)& 0.8 & 1.0 & 0.7 & 1.8 & 2.1  \\
& LC - ws(100)& 0.9 & 0.7 & 0.9 & 1.0 & 1.3  \\
 \bottomrule
\end{tabular}}
\label{app:tab:pad0_minmax}
\end{table}

\begin{table}[]
\caption{Max minus min performance over 5 runs; padding of 4.}
\resizebox{\textwidth}{!}{\begin{tabular}{@{}cccccccccccc@{}}
\toprule
\multirow{2}{*}{\textbf{Regularizer}} &
  \multirow{2}{*}{\textbf{Connectivity}} &
  \multicolumn{1}{c}{\textbf{CIFAR10}} &
  \multicolumn{2}{c}{\textbf{CIFAR100}} &
  \multicolumn{2}{c}{\textbf{TinyImageNet}} \\ \cmidrule(l){3-7} 
 &
   &
  \begin{tabular}[c]{@{}c@{}}Top-1 \\ accuracy (\%)\end{tabular} &
  
  \begin{tabular}[c]{@{}c@{}}Top-1 \\ accuracy (\%)\end{tabular} &

  \begin{tabular}[c]{@{}c@{}}Top-5 \\ accuracy (\%)\end{tabular} &

  \begin{tabular}[c]{@{}c@{}}Top-1 \\ accuracy (\%)\end{tabular} &

  \begin{tabular}[c]{@{}c@{}}Top-5 \\ accuracy (\%)\end{tabular} &
  \\ \midrule
\multirow{2}{*}{-} & conv& 0.7 & 1.5 & 0.2 & 1.2 & 1.1  \\
& LC& 0.8 & 1.1 & 0.4 & 0.7 & 0.8  \\
\midrule
\multirow{3}{*}{Data Translation} & LC - 4 reps& 0.8 & 1.3 & 0.8 & 0.5 & 0.8  \\
& LC - 8 reps& 0.3 & 1.4 & 1.3 & 0.7 & 1.2  \\
& LC - 16 reps& 0.7 & 0.7 & 0.6 & 0.9 & 0.5  \\
\midrule
\multirow{3}{*}{Weight Sharing} & LC - ws(1)& 0.5 & 1.1 & 0.9 & 0.9 & 0.6  \\
& LC - ws(10)& 0.6 & 1.1 & 0.3 & 0.6 & 1.2  \\
& LC - ws(100)& 0.7 & 1.0 & 0.6 & 0.2 & 0.9  \\

 \bottomrule
\end{tabular}}
\label{app:tab:pad4_minmax}
\end{table}

\begin{table}[]
\caption{Max minus min performance over 5 runs; padding of 8.}
\resizebox{\textwidth}{!}{\begin{tabular}{@{}cccccccccccc@{}}
\toprule
\multirow{2}{*}{\textbf{Regularizer}} &
  \multirow{2}{*}{\textbf{Connectivity}} &
  \multicolumn{1}{c}{\textbf{CIFAR10}} &
  \multicolumn{2}{c}{\textbf{CIFAR100}} &
  \multicolumn{2}{c}{\textbf{TinyImageNet}} \\ \cmidrule(l){3-7} 
 &
   &
  \begin{tabular}[c]{@{}c@{}}Top-1 \\ accuracy (\%)\end{tabular} &
  
  \begin{tabular}[c]{@{}c@{}}Top-1 \\ accuracy (\%)\end{tabular} &

  \begin{tabular}[c]{@{}c@{}}Top-5 \\ accuracy (\%)\end{tabular} &

  \begin{tabular}[c]{@{}c@{}}Top-1 \\ accuracy (\%)\end{tabular} &

  \begin{tabular}[c]{@{}c@{}}Top-5 \\ accuracy (\%)\end{tabular} &
  \\ \midrule
\multirow{2}{*}{-} & conv& 0.9 & 1.5 & 1.2 & 1.7 & 1.0  \\
& LC& 0.5 & 0.6 & 0.5 & 0.5 & 0.9  \\
\midrule
\multirow{3}{*}{Data Translation} & LC - 4 reps& 0.4 & 0.9 & 0.3 & 0.6 & 0.8  \\
& LC - 8 reps& 0.6 & 0.9 & 0.5 & 0.5 & 0.6  \\
& LC - 16 reps& 0.9 & 0.9 & 0.6 & 0.5 & 1.1  \\
\midrule
\multirow{3}{*}{Weight Sharing} & LC - ws(1)& 0.4 & 1.2 & 1.5 & 0.7 & 0.7  \\
& LC - ws(10)& 0.2 & 1.4 & 0.9 & 1.4 & 1.2  \\
& LC - ws(100)& 0.4 & 0.5 & 0.7 & 0.7 & 0.9  \\ 
 \bottomrule
\end{tabular}}
\label{app:tab:pad8_minmax}
\end{table}

\begin{table}[]
\caption{Hyperparameters for padding of 0.}
\resizebox{\textwidth}{!}{\begin{tabular}{@{}ccccccccccccc@{}}
\toprule
\multirow{2}{*}{\textbf{Regularizer}} &
  \multirow{2}{*}{\textbf{Connectivity}} &
  \multicolumn{2}{c}{\textbf{CIFAR10}} &
  \multicolumn{2}{c}{\textbf{CIFAR100}} &
  \multicolumn{2}{c}{\textbf{TinyImageNet}} \\ \cmidrule(l){3-8} 
 &
   &
  \begin{tabular}[c]{@{}c@{}}Learning \\ rate \end{tabular} &
  
  \begin{tabular}[c]{@{}c@{}}Weight \\ decay \end{tabular} &
  
  \begin{tabular}[c]{@{}c@{}}Learning \\ rate \end{tabular} &
  
  \begin{tabular}[c]{@{}c@{}}Weight \\ decay \end{tabular} &
  
  \begin{tabular}[c]{@{}c@{}}Learning \\ rate \end{tabular} &
  
  \begin{tabular}[c]{@{}c@{}}Weight \\ decay \end{tabular} &
  \\ \midrule
\multirow{2}{*}{-} & conv& 0.01 & 0.01 & 0.01 & 0.01 & 0.005 & 0.01  \\
& LC& 0.001 & 0.01 & 0.001 & 0.01 & 0.001 & 0.0001  \\
\midrule
\multirow{3}{*}{Weight Sharing} & LC - ws(1)& 0.001 & 0.01 & 0.001 & 0.01 & 0.001 & 0.0001  \\
& LC - ws(10)& 0.0005 & 0.01 & 0.0005 & 0.0001 & 0.0005 & 0.01  \\
& LC - ws(100)& 0.0001 & 0.01 & 0.0001 & 0.01 & 0.001 & 0.0001  \\
 \bottomrule
\end{tabular}}
\label{app:tab:pad0_param}
\end{table}

\begin{table}[]
\caption{Hyperparameters for padding of 4.}
\resizebox{\textwidth}{!}{\begin{tabular}{@{}ccccccccccccc@{}}
\toprule
\multirow{2}{*}{\textbf{Regularizer}} &
  \multirow{2}{*}{\textbf{Connectivity}} &
  \multicolumn{2}{c}{\textbf{CIFAR10}} &
  \multicolumn{2}{c}{\textbf{CIFAR100}} &
  \multicolumn{2}{c}{\textbf{TinyImageNet}} \\ \cmidrule(l){3-8} 
 &
   &
  \begin{tabular}[c]{@{}c@{}}Learning \\ rate \end{tabular} &
  
  \begin{tabular}[c]{@{}c@{}}Weight \\ decay \end{tabular} &
  
  \begin{tabular}[c]{@{}c@{}}Learning \\ rate \end{tabular} &
  
  \begin{tabular}[c]{@{}c@{}}Weight \\ decay \end{tabular} &
  
  \begin{tabular}[c]{@{}c@{}}Learning \\ rate \end{tabular} &
  
  \begin{tabular}[c]{@{}c@{}}Weight \\ decay \end{tabular} &
  \\ \midrule
\multirow{2}{*}{-} & conv& 0.01 & 0.0001 & 0.01 & 0.01 & 0.005 & 0.0001  \\
& LC& 0.001 & 0.0001 & 0.0005 & 0.01 & 0.0005 & 0.0001  \\
\midrule
\multirow{3}{*}{Data Translation} & LC - 4 reps& 0.001 & 0.01 & 0.001 & 0.01 & 0.0005 & 0.01  \\
& LC - 8 reps& 0.0005 & 0.01 & 0.0005 & 0.0001 & 0.0005 & 0.01  \\
& LC - 16 reps& 0.0005 & 0.01 & 0.0005 & 0.01 & 0.0005 & 0.01  \\
\midrule
\multirow{3}{*}{Weight Sharing} & LC - ws(1)& 0.001 & 0.01 & 0.001 & 0.0001 & 0.001 & 0.01  \\
& LC - ws(10)& 0.0005 & 0.01 & 0.0005 & 0.01 & 0.001 & 0.0001  \\
& LC - ws(100)& 0.0005 & 0.01 & 0.0005 & 0.01 & 0.001 & 0.01  \\ \bottomrule
\end{tabular}}
\label{app:tab:pad4_param}
\end{table}

\begin{table}[]
\caption{Hyperparameters for padding of 8.}
\resizebox{\textwidth}{!}{\begin{tabular}{@{}ccccccccccccc@{}}
\toprule
\multirow{2}{*}{\textbf{Regularizer}} &
  \multirow{2}{*}{\textbf{Connectivity}} &
  \multicolumn{2}{c}{\textbf{CIFAR10}} &
  \multicolumn{2}{c}{\textbf{CIFAR100}} &
  \multicolumn{2}{c}{\textbf{TinyImageNet}} \\ \cmidrule(l){3-8} 
 &
   &
  \begin{tabular}[c]{@{}c@{}}Learning \\ rate \end{tabular} &
  
  \begin{tabular}[c]{@{}c@{}}Weight \\ decay \end{tabular} &
  
  \begin{tabular}[c]{@{}c@{}}Learning \\ rate \end{tabular} &
  
  \begin{tabular}[c]{@{}c@{}}Weight \\ decay \end{tabular} &
  
  \begin{tabular}[c]{@{}c@{}}Learning \\ rate \end{tabular} &
  
  \begin{tabular}[c]{@{}c@{}}Weight \\ decay \end{tabular} &
  \\ \midrule
\multirow{2}{*}{-} & conv& 0.01 & 0.01 & 0.01 & 0.01 & 0.005 & 0.01  \\
& LC& 0.001 & 0.01 & 0.0005 & 0.0001 & 0.001 & 0.01  \\
\midrule
\multirow{3}{*}{Data Translation} & LC - 4 reps& 0.0005 & 0.01 & 0.001 & 0.0001 & 0.0005 & 0.01  \\
& LC - 8 reps& 0.001 & 0.01 & 0.0005 & 0.0001 & 0.0005 & 0.0001  \\
& LC - 16 reps& 0.0005 & 0.0001 & 0.0005 & 0.01 & 0.0005 & 0.01  \\
\midrule
\multirow{3}{*}{Weight Sharing} & LC - ws(1)& 0.001 &
0.0001 & 0.001 & 0.01 & 0.001 & 0.01  \\
& LC - ws(10)& 0.0005 & 0.01 & 0.0005 & 0.0001 & 0.001 & 0.0001  \\
& LC - ws(100)& 0.0005 & 0.01 & 0.0005 & 0.0001 & 0.001 & 0.0001  \\
\bottomrule
\end{tabular}}
\label{app:tab:pad8_param}
\end{table}

\begin{table}[]
\caption{Performance of convolutional (conv), locally connected (LC) and locally connected with convolutional first layer  (LC + 1st layer conv) networks on ImageNet (1 run). For LC, we also used dynamic weight sharing every $n$ batches. LC nets additionally show performance difference w.r.t. the conv net.}
\resizebox{\textwidth}{!}{
\begin{tabular}{@{}ccccccc@{}}
\toprule
\multirow{2}{*}{\textbf{Model}} &
  \multirow{2}{*}{\textbf{Connectivity}} &
  \multirow{2}{*}{\textbf{\begin{tabular}[c]{@{}c@{}}Weight sharing \\ frequency\end{tabular}}} &
  \multicolumn{4}{c}{\textbf{ImageNet}} \\ \cmidrule(l){4-7} 
 &
   &
   &
  \begin{tabular}[c]{@{}c@{}}Top-1 \\ accuracy (\%)\end{tabular} &
  \multicolumn{1}{c}{\begin{tabular}[c]{@{}c@{}}Diff\end{tabular}} &
  \multicolumn{1}{c}{\begin{tabular}[c]{@{}c@{}}Top-5 \\ accuracy (\%)\end{tabular}} &
  \multicolumn{1}{c}{\begin{tabular}[c]{@{}c@{}} Diff\end{tabular}} \\
  \midrule
  \multirow{5}{*}{\begin{tabular}[c]{@{}c@{}}0.5x ResNet18\end{tabular}}
    & conv & -   & 63.5   & - & 84.7 & - \\
    & LC   & -   & 46.7   & -16.8  & 70.0 & -14.7 \\
    & LC   & 1   & 61.7   & -1.8  & 83.1     & -1.6 \\
    & LC   & 10  & 59.3  & -4.2  & 81.1 & -3.6 \\
    & LC   & 100 & 54.5  & -9.0  & 77.7 & -7.0 \\
    \midrule
   \multirow{5}{*}{\begin{tabular}[c]{@{}c@{}}0.5x ResNet18 \\ (1st layer conv)\end{tabular}}
    & LC   & -   & 52.2   & -11.3  & 75.1 & -9.6 \\
    & LC   & 1   & 63.6   & 0.1  & 84.5 & -0.2 \\
    & LC   & 10  & 61.6  & -1.9  & 83.1 & -1.6 \\
    & LC   & 100 & 59.1  & -4.4  & 81.1 & -3.6 \\
    \bottomrule
\end{tabular}}
\label{app:tab:imagenet_accuracy}
\end{table}

\end{document}